\def\BibTeX{{\rm B\kern-.05em{\sc i\kern-.025em b}\kern-.08em
    T\kern-.1667em\lower.7ex\hbox{E}\kern-.125emX}}
\newcommand{\inda}{\phantom{1}\hspace{0.5mm}}
\newcommand{\indb}{\phantom{1}\hspace{3.5mm}}
\newcommand{\bW}{\boldsymbol{W}}
\newcommand{\bWo}{\boldsymbol{W}^\mathrm{o}}
\newcommand{\HAC}{{\tt{HAC}}} 
\newcommand{\sHAC}{{\tt{sHAC}}} 
\newcommand{\bx}{\boldsymbol{x}}
\newcommand{\bB}{\boldsymbol{B}}
\newcommand{\bv}{\boldsymbol{v}}
\newcommand{\br}{\boldsymbol{r}}
\newcommand{\bc}{\boldsymbol{c}}
\newcommand{\bnz}{\boldsymbol{nz}}
\newcommand{\bz}{\boldsymbol{z}}
\newcommand{\bcb}{\boldsymbol{cb}}
\newcommand{\bri}{\boldsymbol{ri}}
\newcommand{\bA}{\boldsymbol{A}}
\newcommand{\bI}{\mathbold{\Pi}} 
\newcommand{\bH}{\boldsymbol{H}}
\newcommand{\bU}{\boldsymbol{U}}
\newcommand{\bSigma}{\boldsymbol{\Sigma}}
\newcommand{\bV}{\boldsymbol{V}}
\newcommand{\bX}{\boldsymbol{X}}
\newtheorem{theorem}{Fact}
\newtheorem{corol}{Corollary}
\newtheorem{example}{Example}
\DeclareMathAlphabet{\mathpzc}{OT1}{pzc}{m}{it}
\newcommand*{\addFileDependency}[1]{
  \typeout{(#1)}
  \@addtofilelist{#1}
  \IfFileExists{#1}{}{\typeout{No file #1.}}
}
\newcommand*{\myexternaldocument}[1]{%
    \externaldocument[S-]{#1}%
    \addFileDependency{#1.tex}%
    \addFileDependency{#1.aux}%
}
\newcommand{\linebreakand}{%
  \end{@IEEEauthorhalign}
  \hfill\mbox{}\par
  \mbox{}\hfill\begin{@IEEEauthorhalign}
}
\begin{document}

\title{Compact representations of convolutional neural networks via weight pruning and quantization}

\author{\IEEEauthorblockN{Giosuè Cataldo Marinò, Alessandro Petrini, Dario Malchiodi, Marco Frasca}\\
\IEEEauthorblockA{\textit{Università degli Studi di Milano},
Milano, Italy}\protect\\
E-mail: giosue.marino@studenti.unimi.it, \{alessandro.petrini, dario.malchiodi, marco.frasca\}@unimi.it}

\markboth{Journal of \LaTeX\ Class Files,~Vol.~14, No.~8, August~2015}%
{Shell \MakeLowercase{\textit{et al.}}: Bare Demo of IEEEtran.cls for IEEE Journals}

\maketitle

\begin{abstract}
The state-of-the-art performance for several real-world problems is currently reached by convolutional neural networks (CNN). Such learning models exploit recent results in the field of deep learning, typically leading to highly performing, yet very large neural networks with (at least) millions of parameters. As a result, the deployment of such models is not possible when only small amounts of RAM are available, or in general within resource-limited platforms, and strategies to
 compress CNNs became thus of paramount importance.
In this paper we propose a novel lossless storage format for CNNs based on source coding and leveraging both weight pruning and quantization.
We theoretically derive the space upper bounds for the proposed structures, showing their relationship with both sparsity and quantization levels of the weight matrices.
Both compression rates and execution times have been tested against reference methods for matrix compression, and an empirical evaluation of state-of-the-art quantization schemes based on weight sharing is also discussed, to assess their impact on the performance when applied to both convolutional and fully connected layers. {On four benchmarks for classification and regression problems and comparing to the baseline pre-trained uncompressed network, we achieved a reduction of space occupancy up to $0.6\%$ on fully connected layers and $5.44\%$ on the whole network, while performing at least as competitive as the baseline}.
\end{abstract}

\begin{IEEEkeywords}
CNN compression, space-conscious data structures, weight pruning, weight quantization, weight sharing, source coding
\end{IEEEkeywords}

\IEEEpeerreviewmaketitle
\urlstyle{tt}

\section{Introduction}


The methodology behind deep neural networks (DNNs) dates back to more than forty years ago.
However, the availability of dedicated hardware (such as GPUs or TPUs) and of huge datasets recently allowed to maximize the performance of several DNN-based predictors, setting in practice the state-of-the-art for several problems of image processing, financial forecasting, and so on.
Convolutional neural networks (CNNs) played a key role in this advancement, and several such pre-trained models,
such as for instance AlexNet~\cite{krizhevsky2012imagenet} and VGG16~\cite{Simonyan15},
are avaliable for use
as base models for the applications of transfer learning techniques~\cite{transfer-learning}.
In any case, such models have a considerable memory footprint: for instance, the above mentioned VGG16 demands around 500 MB.
This also impacts on the energy consumption required to query such models, thus limiting their use in mobile phones, smartwatches, and in general within the IoT world. As a matter of fact, the need of \emph{space-conscious} models is actually emerging in several machine learning applications~\cite{Ferragina:2020pgm}.
Although some learning approaches directly produce succint models~\cite{sandler2018mobilenetv2},
in this paper we consider the problem of \emph{compressing} pre-trained CNNs, whilst not altering their structure, so as to be able to reuse the wealth of available models and their full capabilities. This is not a limitation, as other compression approaches varying the network topology can be earlier applied.
Specifically, we aim at preserving the original structure of pre-trained models while suitably adjusting the representation of their parameters, as well as the way they are stored. To this end, two novel storage formats 
are presented here, namely \emph{Huffman Address Map} (\HAC) and \emph{sparse Huffman Address Map} (\sHAC), able to extract the knowledge distributed onto millions, or even billions, of connection weights of a learnt network, transforming it in another structure exhibiting a considerably smaller memory footprint, without sacrificing its performance (or even improving it).
This is achieved by jointly applying lossy compression schemes for the NN weights and entropy coding techniques for the lossless representation of the compressed weights. Our goal to not modify the original network structure imposed to select in the vast domain of CNN compression methodologies those ensuring this property, that is weight pruning and quantization. Subsequently, the proposed representations are devised to benefit from both these two techniques,
and combine entropy coding, address maps, and compressed sparse column (CSC) representations.
Concerning the compression schemes, the top-performing quantization methods in literature have been considered, including a recently proposed probabilistic technique borrowed from the realm of federated learning.

We tested the proposed methodology using two publicly available CNNs and four benckmarks (two datasets for image classification and two for the prediction of drug-target affinity), obtaining as a result the confirmation that a well-conceived compression can even lead to better performance w.r.t.\ uncompressed models.
These experiments provided important indications about the behaviour of individual compression schemes when applied to both convolutional and dense layers,
with the latter yielding in average higher performance and compression rates. When applied to the whole model (hence to both types of layer simultaneously), our representations achieved compression rates up to around $20\times$, {while not worsening the model performance.}

The paper is organized as follows: Sect.~\ref{sec:related-works} describes the principal approaches for CNN compression, while Sects.~\ref{sec:compression-techniques} and \ref{sec:compressed-representations} describe the considered compression and representation techniques. Section~\ref{sec:experiments} illustrates the above mentioned experimental comparison, depicted in terms of performance gain/degradation, achieved compression rate, and execution times. Some concluding remarks end the paper.

\section{{Related work}}
\label{sec:related-works}
Existing DNN compression methods can be classified into five broad categories, i.e., weight pruning, weight quantization, low-rank matrix and tensor decomposition, knowledge distillation and structural compression. In this distinction, the techniques based on weight sharing are included in the broader category of weight quantization. {The above mentioned categories are separately described here below. The reader interested to a thorough review of compression methods for NNs can refer for instance to~\cite{deng2020model,cheng2017survey}.}
{\subsection{Weight pruning}}
\label{sec:weight_pruning}
Weight pruning is likely the most commonly used technique to lower the number of parameters in a DNN.
It consists in eliminating 
connections deemed as irrelevant,
thus reducing the number of parameters, while clamping the weight  matrix dimension ({unlike in structural compression, see Sect.~\ref{sub:structCompr}}).
Libraries for sparse matrix multiplication (SM) can be used
to take full advantage of
the memory reduction.
However, SM tends to be slower than its dense counterpart;
as a consequence, structural compression is often jointly applied with pruning. The simplest pruning strategy involves
to set a threshold $\tau$ and remove weights whose absolute value is lower than $\tau$~\cite{Hagiwara93}. The threshold can be layer specific, or
it can be set for the whole network.
This criterion, often
called magnitude-based pruning,
involves a subsequent fine-tuning (retraining) of the remaining weights. In another class of pruning approaches, ($L_1$  or $L_2$) regularization terms
drive the learning algorithm to output a network in which several weights have negligible values, so that pruning
becomes
a straightforward post-training step~\cite{Weigend90}.

Pruning has also been performed via
genetic algorithms~\cite{WHITLEY90} and particle swarm optimization~\cite{Juanjuan10}.
Using
these techniques, weights
initially removed
can be
reinserted;
however, their complexity allows application to DNNs of limited size.

{\subsection{Weight quantization}\label{sub:WQ}
}
{In neural networks, \emph{quantization}
consists in
fixing the number of bits used
to represent weights, activations or gradient values. This leads to a compressed model for instance when single precision floating point (FP32, as each value requires 32 bits) is used in place of standard double precision.
}
Half-precision
FP
(FP16) and integer arithmetic (INT16) are also commonly considered. To reduce memory 
{footprint, and}
speed up training,
even smaller precisions have been recently evaluated,
considering short integers (INT8, INT4, INT2), or just 1-bit~\cite{Dally15}.
{In any case, using less than 32 bits}
can yield unacceptable performance decays. A standard approach in this context is to replace
FP operations with integer ones, sometimes adding a further training ameliorating the accuracy compromise during this replacement~\cite{Jacob18}. Linear and logarithmic schemes
using
less than 8 bits  have been applied 
achieving a limited accuracy drop~\cite{Hubara17}. Stochastic 
techniques
also have a role in
{letting FP16-based DNNs}
converge
to
a test accuracy 
{comparable to that of}
their FP32
{counterparts~\cite{Gupta15}.}
{An extreme quantization
takes place with \emph{binarization}, in which}
a single bit and logic gates are used for representation and FP operations,
with a strong
memory reduction
at the expense of
an accuracy
drop~\cite{Hubara16}.
{A}
hybrid approach
quantizes values in
{narrow}
regions,
using higher precision for the remaining values.
Thus,
a smaller number of bits 
is used for the regions in which most values lie, leading
to smaller quantization errors~\cite{Park18}.
Loss-aware
{strategies}
have also been used to contain the penalty induced by quantization. In~\cite{Hou17} the compression scheme $\bW=\alpha\bB$ is adopted,
where $\alpha \in \mathbb{R}$ and $\bB$
have binary entries,
and the loss is minimized over $\alpha$ and $\bB$.
Similarly, weights are first divided in two groups, then only one group is quantized
using solely powers of $2$ as weights~\cite{Zhou17}.
In weight sharing techniques, weights are partitioned into $k$ categories,
and a representative value replaces all weights in its category.
When
$k$ is low,
representatives can be stored
with full precision in a vector $\br$, and keep in memory the matrix $\bI$ containing pointers to $\br$; as pointers require less space than FP32 weights (e.g., INT8 when $k\leq 256$), $\bI$ is more compact than $\bW$ and largely compensates for the additional $\br$~\cite{Han15}.
We will refer to this approach as \textit{index map}. These methods mainly differ in
the representative selection,
e.g., by means of clustering~\cite{Han15}, statistical methods~\cite{HAMICPR,RRPR21},
{and partitioning schemes~\cite{Choi20,ECSQ91}.
}
Details about these methods are given in Sect.~\ref{sec:compression-techniques}.

{\subsection{Low-rank  matrix  and  tensor  decomposition}\label{sub:LRD}}
DNNs can be compressed by decomposing weight tensors 
in a lower rank approximation:
a matrix $\bW \in \mathbb{R}^{n\times m}$ of full rank $r$ can be decomposed as $\bW=\bA\bH$, where $\bA \in \mathbb{R}^{n\times r}$ and $\bH \in \mathbb{R}^{r\times m}$, moving space complexity from $\mathcal{O}(nm)$ to $\mathcal{O}(r(n+m))$, with some approximation error coming from the estimate of low-rank matrices.
Singular Value Decomposition (SVD) has been widely used to achieve such a decomposition~\cite{Xue13,Sainath13}:
$\bW$ is factorized as
$\bW= \bU\bSigma\bV^T$, where $\bU \in \mathbb{R}^{n\times r}$ and $\bV \in \mathbb{R}^{m\times r}$ are orthogonal matrices, and $\bSigma \in \mathbb{R}^{r\times r}$ is the diagonal matrix of singular values. The nonzero elements of $\bSigma$ are sorted in decreasing order (along with rows of $\bU$ and $\bV$), and the top $q$ rows are used in the approximation $\bW\approx \bU_q\bSigma_q{\bV_q}^T$, where
$\bX_q$ is the sub-matrix containing the first $q$ rows of $\bX$. If $q < r$, it is called truncated SVD. This approach has been branched out also to tensors in recurrent NNs~\cite{DeLathauwer08}
and in CNNs~\cite{Rigamonti13,Jaderberg14,Yu17}.

{\subsection{Knowledge distillation }\label{sub:KD}}
This compression technique encompasses different approaches to learn a `thinner' DNN model, called \textit{student}, from a larger \textit{teacher} model,
{whose outputs act as soft targets for the training process.}
{The teacher output should be a probability distribution, and the idea is to exploit the corresponding logits
to `distill' information to the student, which is trained minimizing the cross entropy between the teachers' logits and its logits~\cite{Caruana14}.}
{More precisely, the softmax layer transforms a logit $z_{j}$
w.r.t.\ the $j$-th class, producing the probability $q_{j}$
to be associated with that class as follows:
\begin{equation*}
    q_{j} := \phi(z_{j}):=\frac{\mathrm e^{z_{j}/T}}{\sum_k \mathrm e^{z_{k}/T}},
\end{equation*}}
where $T>1$ is a temperature parameter.
{Now, let $v_{j}$ and $z_{j}$ denote the logits of teacher and student, respectively, and $q_{j} = \phi(z_{j})$, $p_{j} = \phi(v_{j})$.
}
In~\cite{hinton15}, knowledge distillation (KD) is shown to be a special case of matching logits among two models, leading to the following {cross entropy} loss:
\begin{equation}\label{eq:KDopt}
{E(\bz, \bv) = -\sum_j p_j\log q_j.}
\end{equation}
The derivative of $E$ with respect to $z_{j}$ is then calculated as
\begin{equation*}
{\frac{\partial E}{\partial z_{j}} = \frac{1}{T} (q_j- p_j) = \frac{1}{T}\Big(\frac{\mathrm e^{z_{j}/T}}{\sum_k \mathrm e^{z_{k}/T}} - \frac {\mathrm e^{v_{j}/T}}{\sum_k \mathrm e^{v_{k}/T}}
\Big)},
\end{equation*}
which allows to minimize  Eq.~(\ref{eq:KDopt}) and to improve the student performance exploiting the pre-trained teacher's (or ensemble of teachers) generalization ability.
When distilling a linear classifier,
the student is able to learn exactly the teacher's outcome if the number of examples is
{greater or equal to the size of the original training set.}
{
When less data is available, the student finds the best approximation of the teacher’s weight vector within}
the subspace spanned by training data~\cite{phuong19}.

A limitation emphasized for 
KD is that, although the student  is able to reproduce
the behavior of the teacher on training data,
{it might loose its generalization ability.}
Early stopping in training the student can mitigate this drawback.
Moreover, it has been discussed that the student size cannot be excessively small:
{a big gap between teacher and student}
leads to improper knowledge transmission~\cite{TA19}.
The \emph{teacher assistant} (TA) model,
distilling
{an intermediate TA network from the teacher, permits to mitigate such difficulty,}
in the so-called \emph{multi-step} KD.
Further, label smoothing has been shown  leading to performance degradation since it provides less information to the student about class boundaries~\cite{Muller19}. 
Finally, a contrastive loss between teacher and student outputs has also been proposed
{in place of cross entropy,}
to preserve structural information of the embedding space~\cite{Tian20}.\\

{\subsection{Structural Pruning}\label{sub:structCompr}}
In structural compression,
inadequate components {(e.g., units or layers)} are pruned, usually through iterative procedures~\cite{MallyaL18}.
The resulting
NN is
faster,
low power consuming, and memory-efficient, at the expense of a
contained accuracy drop.
Adequacy is related to the
loss change incurred when
a
component is
removed ~\cite{Molchanov19}.
\emph{Skeletonization}, for instance,
{consists in coupling each connection to an importance coefficient.
The more general class of \emph{loss sensitivity} methods calculates importance coefficients via measures of loss variation using first~\cite{Molchanov16} or second order derivatives~\cite{Lecun90}, and after a joint training of connections and coefficients, all units having the lowest importance coefficients are eliminated.

{Dense} network layers have been for instance shrunk by cutting away neurons isolated during an iterated weight pruning,
adding
a smaller number of new neurons to improve performance,  so that the overall number of parameters monotonically decreases~\cite{Han13}. {Analogously, filters of convolutional layers are removed in~\cite{Li17} when their contribution to the overall accuracy is negligible.}
{As the number of channels in a given layer does not change across filters,}
the sum of the weight magnitudes can be computed as an average of weight value for each {filter, pruning the latter if this average is small.}
An efficient approach, named ThiNet, includes in the learning objective the rate of filters to be retained based on the output to the next layer:
{when this output}
can be accurately approximated using only a subset of its input channels, the other filters 
can be eliminated~\cite{Thinet}. 
\emph{Channel pruning} has also been proposed to lessen computation and storage requirements,
{removing unimportant channels
according to their relevance in determining the layer output~\cite{He17}.}
Finally, \emph{layer pruning} 
removes
some selected layers, mainly when striking compression is required (e.g., for deploying on smart devices)~\cite{Nips18,Tan19,Chauhan18}. However,
this
results in higher accuracy {decay}
due to structural deterioration of the DNN model, as some layers have specific semantics.
\bigskip

\section{Compression techniques}
\label{sec:compression-techniques}
{As mentioned above, in this work we focus on compressions which do not alter the structure of the pre-trained network. Therefore, the most promising state-of-the-art weight pruning and quantization techniques have been considered, with the aim to store their result via the structures proposed in Section~\ref{sec:compressed-representations}  (designed to leverage their compression properties). The description of such compression strategies is supplied in the following, along with some preliminary definitions.}
\subsection{{Preliminary definitions}}\label{sub:prel}
The connection weights of one layer in the network are denoted by a matrix $\bWo \in \mathbb R^{n \times m}$, whereas its compressed version is $\bW$, having the same dimension of $\bWo$.
Symbols $w^\mathrm{o}$ and $w$ denote generic entries of $\bWo$ and $\bW$, respectively. The \emph{occupancy ratio} of $\bW$ is defined as $\psi = \frac{\mathrm{size}(\bWo)}{\mathrm{size}(\bW)}$ (reciprocal of \textit{compression ratio}), where $\mathrm{size}(x)$ is the memory size of $x$.
Boldface and italic boldface is used for matrices and vectors (e.g., $\bW$ and $\bx$), while $|\cdot|$ is a cardinality operator returning the length of a string or the number of elements in a vector; $\mathds{1}_A$ denotes the characteristic function of a set $A$. The $\log$ function always refers to the binary logarithm. Finally, {$s\in [0,1]$ is the ratio of non-zero elements in $\bW$ (number of non-zero entries divided by $nm$), and $1-s$ its \emph{sparsity coefficient}}.

\subsection{Weight Pruning}
{We implemented weight pruning (see Sect.~\ref{sec:weight_pruning}) removing weights
that are
small in absolute value.
After having fixed an empirical percentile $w_p$ of the entries of $\bWo$, we defined $\bW$ by setting $w = w^\mathrm{o}$ if $|w| > w_p$, $0$ otherwise.
The time complexity is $\mathcal O(nm \log (nm))$, as the sorting step needed for the computation of $w_p$ dominates the overall procedure. We subsequently retrained the network on the same dataset, only updating non-null weights in $\bW$.}
The only hyper-parameter is the percentile level $p$,
related in turn
to the sparsity coefficient.
(see Sect.~\ref{sec:experiments} for a description of how {hyper-parameters of all considered compression methods have been selected).}

\subsection{{Quantization via weight sharing}}\label{sub:WSquant}

{
This quantization strategy
consists in reducing the space needed to store individual weights via weight sharing (WS), expressly by casting connection weights into categories and substituting all weights in a category with their representative. This allows quantization by replacing weights with their category index. In the following, the state-of-the-art approaches in this context considered for this work are are briefly introduced.}
\subsubsection{{Clustering-based WS (CWS)}}\label{subsub:CWS}
this strategy, fixed the number $k$ of clusters,
aims at gathering similar values in $\bW^\mathrm{o}$ via the $k$-means algorithm~\cite{mcqueen},
{obtaining the corresponding centroids $\{c_1, \dots, c_k \}$, and subsequently replacing each weight in $\bWo$ with the corresponding centroid.
}
Centroids
are
stored in a vector $\bc$
whose indices
populate
the index map $\bI$~\cite{Han15}.
Thus, if
$w^\mathrm{o}_{ij}$ is associated with centroid, say, $c_1$, then $\pi_{ij} = 1$.
Denoted by $b$ and $\bar b$ the number of bits respectively used to store one entry of $\bWo$ and $\bI$, the
occupancy ratio
is given by $\frac{\bar b nm + kb}{bnm}=\frac{\bar b}{b} + \frac{k}{nm}$. For instance, when $k\leq 256$, $\bar b=8$, and assuming FP32
for $\bW^\mathrm{o}$ ($b=32$), the occupancy would be
$\simeq \nicefrac{1}{4}$.
This comes
at the price of 
two memory accesses in order to retrieve a weight.
The time complexity
is $\mathcal O(k(mn)^2)$ (due to $k$-means).
A retraining phase is then applied, ensuring weights always assume values in the centroid set. This is achieved by using the cumulative gradient
\begin{equation*}
    \frac{\partial \mathcal L}{\partial c_l} = \sum_{i,j} \frac{\partial \mathcal L}{\partial w_{ij}}\mathds{1} (\pi_{ij}=l),
\end{equation*}
where $l \in \{ 1, \dots, k \}$.
This
might end up in using less than $k$ weights, if two or more centroids converge to a same value during retraining.
To achieve  a higher compression, in~\cite{Han15} pruning and CWS have been applied in chain, with weight sharing considering non-null weights identified by pruning.

\subsubsection{Probabilistic WS (PWS)}
this technique
is based on a weight sharing technique named \textit{Probabilistic Quantization}, recently proposed in~\cite{HAMICPR} and
relying
{on a probabilistic transformation analogous to those mapping weights onto special binary or ternary values proposed in~\cite{NIPS2015_5647,deng2018gxnor}. Given}
$\underline w = \min \bWo,\  \overline w = \max \bWo$,
PWS is based on the following probabilistic rationale: suppose that each learnt weight $w^\mathrm{o}$ is the specification of a random variable $W^\mathrm{o}$ 
with support $\mathcal W := [\underline w, \overline w]$. If $W$ denotes the two-valued random variable defined by
$
    \mathrm P(W = \underline w) = \frac{\overline w - w}{\overline w - \underline w}, \quad
    \mathrm P(W = \overline w) = \frac{w - \underline w}{\overline w - \underline w}\enspace$,
the specifications of $W$ approximate a weight $w$ through an extreme form of weight sharing, now only using two representative values. It is easy to show that $\mathcal E(W | W^\mathrm{o}=w) = w$, so that independently of the distribution of $W^\mathrm{o}$,
\begin{multline*}
\mathcal{E}(W) =
\int_\mathcal W \mathcal{E}(W | W^\mathrm{o}=w) f_{W^\mathrm{o}}(w) \mathrm d w =
\\ \int_\mathcal W w f_{W^\mathrm{o}}(w) \mathrm d w = \mathcal E(W^\mathrm{o}).
\end{multline*}

{As a consequence, pseudorandomly extracting a specification of $W^\mathrm{o}$ for each entry $w^\mathrm{o}$ and building new matrix $\bW$ using these specifications as entries, we obtain an highly compressible \emph{unbiased} estimator of $\bWo$.}
The method is extendable by partitioning $\mathcal W$ in $k > 2$ intervals. A generic $w^\mathrm{o}$ would be obtained
{exactly as above, by referring to $\overline w$ and $\underline w$ as the extremes of the sub-interval containing $w^\mathrm{o}$.}
{The $k$ sub-intervals should be chosen preserving unbiasedness:}
some knowledge about the distribution of $\bWo$ would help, like in the following example.

\begin{example}
Assume each element of $\bWo$ to be distributed uniformly over $\mathcal W$, and partition the latter set in $k$ sub-intervals evenly spaced
$[p_{i*}, p_i^*)$, for $i = 1, \dots, k$, where
\begin{eqnarray*}
p_{i*}  =  \underline w + \frac{i-1}{k} (\overline w - \underline w), \hspace{0.5cm}
p_i^*  =  \underline w + \frac{i}{k} (\overline w - \underline w).
\end{eqnarray*}
Uniformity implies $f_{\bWo}(w)= \frac{1}{\overline w - \underline w}\mathds{1}_{\mathcal{W}}(w)$, so that
\begin{displaymath}
    \mathrm{P}(\bWo \in P_i) = \int_{\underline w}^{\overline w} f_{\bWo}(w)\mathrm{dw} = \frac{p_i^*-p_{i*}}{p^*-p_*} = \frac{1}{k} \enspace,
\end{displaymath}
having all sub-intervals the same length. Similarly, it can be shown that the variance of the biased estimate $\bW$ quadratically decreases with $k$ and linearly increases w.r.t.\ $|\mathcal W|$.
\end{example}

In general,
unbiasedness can be ensured by fixing the intervals' extremes as 
$\chi_{\frac{i}{k}}$, for $i = 0, \dots, k$, where $\chi_q$ denotes the $q$-quantile of $\bWo$.
{The time and space complexity respectively amount to $\mathcal O(nm \log (nm))$ (due to quantile computation) and $\mathcal O(mn + k)$.}
{Note that, analogously to CWS, post-compression retraining using the cumulative gradient is needed for PWS, which can also be applied in combination with pruning.}

\subsubsection{Uniform Quantization (UQ)}
{this quantization scheme, which selects representative weights uniformly  in the weight domain, has been proven yielding an entropy  asymptotically smaller than that of any other quantizer, regardless of the source statistics, under the assumption that the source has a reasonably smoothed density function~\cite{GishH1968}.
Banking upon this result, in~\cite{Choi20} UQ has been formalized so as to transform the weight $w^\mathrm{o}$ according to the following scheme:
\begin{equation*}
w = \delta \cdot \mathrm{round}((w^\mathrm{o} + d)/\delta)) - d\ ,
\end{equation*}
where $\delta > 0$ is the interval size, $d\in [-\frac{\delta}{2}, \frac{\delta}{2}]$ is a constant bias, and $\mathrm{round}$ is the rounding function.
$\delta$ must be selected based on compression ratio and/or accuracy requirements and the number $k$ of distinct weights desired: compression ratio increases while accuracy degrades and $k$ decreases, as $\delta$ grows.}
\subsubsection{Entropy Constrained Scalar Quantization (ECSQ)}
this compression strategy (also known as Entropy Coded Scalar Quantization)~\cite{ECSQ91,chou1989entropy} transforms a signal described through a random variable onto quantized values. In the realm of NN, the random variable, its specifications, and the quantized values correspond to
$W^\text{o}$, $w^\text{o}$, and $w$ in our notation.
ECSQ partitions the domain of $W^\text{o}$ in $k$ \emph{decision levels}, each identified by an interval $\mathcal W_i$, and selects an analogous number of \emph{representation levels} $w_i$. 
All values belonging to a decision level are transformed onto the corresponding representation.
Decision and representation levels are chosen jointly optimizing the expected value for the quantization distortion $D$ (using a prefixed distortion measure such as MSE) and the entropy $H$ of the resulting distribution of representation levels. The optimal ECSQ scheme has been found minimizing distorsion constraining entropy to not exceed a prefixed threshold~\cite{GishH1968}, considering for instance the optimization of the Lagrange cost
\begin{displaymath}
    D + \lambda H =
    \frac{1}{nm}\sum_{i=1}^{k}
      \sum_{j\in \mathcal W_i}
        \left(
          \left| w^\text{o}_j - w_i \right|^2
          - \lambda \log_2 p_i
        \right)\ ,
\end{displaymath}
where $\lambda$ is a Lagrange multiplier, and $p_i = |\mathcal W_i|/(nm)$.
ECSQ and UQ have shown the best trade-off accuracy/compression rate in a recent state-of-the-art comparison~\cite{Choi20}.}
\section{Compressed Matrix Representation}
\label{sec:compressed-representations}
Once the techniques described in Sect.~\ref{sec:compression-techniques} have been applied, the resulting matrix $\bW$ has the same dimensions of the original matrix.
Here we first describe the classical CSC format, then the proposed compact structures to store $\bW$, exploiting the sparsity and presence of repeated values.
\subsection{Compressed sparse column}\label{sub:CSC}
  The \emph{compressed sparse column} (CSC) format \cite{CSC} is a
  standard
  for storing sparse matrices. It is composed of $3$ arrays:
\begin{itemize}
 \item[-] $\bnz$, containing the nonzero values, listed by columns;
 \item[-] $\bri$, containing the row indices of elements in $\bnz$;
 \item[-] $\bcb$, where the difference $cb_{i+1}-cb_i$ provides the number of nonzero elements in column $i$; thus, $\bcb$ has dimension $m+1$, where $cb_{m+1}=cb_1 + |\bnz|$.
\end{itemize}
\begin{example}\label{ex:CSC}
Consider the matrix

\begin{displaymath}\label{eq:mat_ex}
\bW =
\begin{pmatrix}
1 & 0 & 4 & 0 & 0\\
0 & 10 & 0 & 0 & 0\\
2 & 3 & 0 & 0 & 5\\
0 & 0 & 0 & 0 & 0\\
0 & 0 & 0 & 0 & 6\\
\end{pmatrix}.
\end{displaymath}
Its CSC representation is
$\bnz = (1, 2, 10, 3, 4, 5, 6)$, $\bri = (1, 3, 2, 3, 1, 3, 5)$, and $\bcb = (1, 3, 5, 6, 6, 8)$.
\end{example}
Let $q = |\bnz|$ be the number of nonzero elements in $\bW$,
{and denote henceforth by $b$}
the number of bits used to represent every element of the matrix (one memory word), so that we need $bnm$ bits to store $\bW$, and $(2q+m+1)b$ to store its CSC representation{\footnote{we assumed $b$ bits also for the components of $\bri$, although they can be represented using only ${\lceil\log n\rceil}$ bits, which might be lower than $b$.}}.
The occupancy ratio is given by $\psi_{CSC} = \frac{2q+m+1}{nm}$.
Given the non-zero ratio $s$ of $\bW$, it holds $q=snm$, thus $\psi_{CSC}<1$ implies $s < \frac{1}{2} - (\frac{m+1}{2nm})$.
The  dot product $\bx^T\bW$
{can be computed using a custom procedure for the CSC format, having a time complexity of~$\mathcal{O}(q)$ \cite{CSC} and which can be sped up through parallel computing.}
{Using $b$ bits for each matrix element constitute the main limitation of CSC, whereas better compactness and bit-memory efficiency can be achieved using variable-lenght coding.}
 \subsection{Huffman address map compression}\label{sub:ham}
 {The idea of using Huffman coding for the representation of the weight matrix resulting from pruning and quantization has been suggested, but not realized, in~\cite{Han15}. In this section we provide our realization and its thorough explanation, along with its space upper bound first derived here. 
 }
 {We call the resulting technique \textit{Huffman Address Map compression} (\HAC), as it is based on Huffman coding and address map logic; we remark that this technique applies a lossless compression, as well as in CSC.}
{Address maps organize the matrix entries as a row- or column-order based sequence of bits, in which $0$ identifies any null entry, while each remaining element $z$ is represented by a binary string $a(z)$ encoding its address.
\begin{example}
The bit sequence corresponding to the column-order based address map for the matrix $\bW$ of Example~\ref{ex:CSC} is
\begin{displaymath}
    a(1) 0 a(2) 0 0 0 a(10)a(3)00a(4)00000000000a(5)0a(6)\ .
\end{displaymath}
\end{example}
}
{Of course, in order to achieve efficiency it is necessary to rely on compact representations for addresses. We implemented $a(z)$ via the corresponding Huffman coding $H_{\bW}(z)$, in view of its well-known properties: it is instantaneous, uniquely decodable and it has a near-optimal compression rate~\cite{Huffman52}. More precisely, if we consider a source $(w_1, \ldots, w_l)$, the corresponding probabilities $(p_1, \ldots, p_l)$, and denote
\begin{itemize}
    \item by $\mathcal{H}=-\sum_{i=1}^l p_i \log p_i$ the source entropy, which
    by Shannon's source coding theorem
    corresponds
    to the minimal average number of bits per symbol~\cite{Shannon48}, and
    \item by $\overline H_{\bW} := \sum_{i=1}^l p_i |H_{\bW}(w_i)|$ the average number of bits per symbol attained when using the Huffman coding,
\end{itemize}
it can be shown that  $\mathcal{H} \leq |\overline H_{\bW}| \leq \mathcal{H} + 1$.}


To get uniquely decodable strings
{we also include zeroes in the Huffman code.
This brings us to a total of $q + 1$ codewords. We denote by $\HAC(\bW)$ the bit stream encoding $\bW$, and split the former into $N=\lceil {|\HAC(\bW)|}/{b}\rceil$ memory words, in turn denoted as $\HAC(\bW)_1, \ldots, \HAC(\bW)_N$ and represented as an array $\mathcal{C}_{\HAC}(\bW)$ of $N$ unsigned integers. Zero-padding is added to
the last word when $|\HAC(\bW)|$ is not a multiple of $b$.}
{
\begin{theorem}(\HAC\ worst case)\label{fact1}
{If $\bW$ is dense and it does not contain repeated entries,
\begin{equation*}
    |\HAC(\bW)| \leq nm(1+\log(nm)) + 6nm b
\end{equation*}
when B-trees are used to represent both dictionaries implementing the mappings $H_{\bW}$ and $H_{\bW}^{-1}$.}
\end{theorem}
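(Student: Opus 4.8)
The plan is to split $|\HAC(\bW)|$ into the cost of the encoded bit stream and the cost of the two B-tree dictionaries implementing $H_{\bW}$ and $H_{\bW}^{-1}$, and to match each part to one of the two summands in the claim. First I would exploit the two hypotheses directly. Density gives $q = nm$, so the address map contains no $0$ separators and the stream is merely the concatenation of the $nm$ codewords. Distinctness of the entries means that the $nm$ weights constitute $nm$ distinct symbols, each occurring exactly once, hence a uniform source with $p_i = \frac{1}{nm}$ and entropy $\mathcal{H} = -\sum_{i=1}^{nm}\frac{1}{nm}\log\frac{1}{nm} = \log(nm)$.

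For the stream I would invoke the near-optimality bound recalled above, $\overline H_{\bW} \leq \mathcal{H} + 1$, giving an average codeword length of at most $1 + \log(nm)$. Since each of the $nm$ entries contributes exactly one codeword, the stream length equals $nm\,\overline H_{\bW} \leq nm(1 + \log(nm))$, which is precisely the first term of the bound.

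For the dictionaries I would account the storage of each B-tree as the sum of its keys, its values and its child pointers. Each tree holds $nm$ key--value pairs: in $H_{\bW}$ the keys are weights and the values are codewords, and the roles are swapped in $H_{\bW}^{-1}$. A weight occupies one memory word ($b$ bits); moreover, because the source is uniform the Huffman tree is balanced, so every codeword has length at most $\lceil \log(nm)\rceil \leq b$ and likewise fits in one word. Since a B-tree on $nm$ keys has at most $nm$ nodes, hence fewer than $nm$ child pointers, each tree costs at most $nmb$ for keys, $nmb$ for values and less than $nmb$ for pointers, i.e.\ at most $3nmb$ bits; the two dictionaries together cost at most $6nmb$. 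Adding the stream bound yields $|\HAC(\bW)| \leq nm(1+\log(nm)) + 6nmb$.

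The hard part will be the dictionary accounting rather than the stream. One must argue carefully that every codeword fits in a single word, which reduces to bounding the maximal Huffman codeword length, a bound that holds here precisely because distinctness forces a uniform, and therefore balanced, source; and one must check that the B-tree structural overhead (node occupancy and child pointers) stays linear in $nm$ with constant at most $3$ per tree. The entropy computation and the resulting stream bound, by contrast, follow immediately from the Huffman inequality already stated.
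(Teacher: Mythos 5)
Your proposal is correct and follows essentially the same route as the paper's proof: uniformity of the source from the distinctness hypothesis gives $\mathcal{H}=\log(nm)$, the Huffman bound $\mathcal{H}+1$ yields the stream term $nm(1+\log(nm))$, and each B-tree dictionary is charged $3b$ bits per entry ($2b$ for the key--value pair, at most $b$ for a child pointer, overcounted since a B-tree has fewer pointers than keys), giving $6nmb$ in total. Your extra remark that codewords fit in one word because the uniform source yields a balanced Huffman tree is a small refinement of an assumption the paper simply states outright.
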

\begin{proof}
{
By hypothesis
each of the $nm$ symbols of $\bW$ appear exactly once, thus
$\mathcal{H} = \log (nm)$, the corresponding Huffman code $H_{\bW}$ has an average codeword length upper-bounded by $1+\log (nm)$ and at most $nm(1+\log(nm))$  bits are needed. Assuming each value in the B-tree is represented trough 1 
word ($b$ bits), each dictionary requires $3b$ bits per entry: $2b$ bits to store each pair $(z, H_{\bW}(z))$, and at most $b$ bits to store a pointer in the B-tree structure (this is overestimated since we have less pointers than keys in a B-tree).
The thesis follows.
}
\end{proof}
The upper bound provided by Fact \ref{fact1} can be reduced, as
there are methods storing a $n$-symbols Huffman code using at most $\lceil 10.75n\rceil -3$ bits \cite{Sultana12}.
Moreover, $nm(1+\log(nm)) + 6nmb$ is larger than
the number of bits required by an uncompressed matrix. For this reason,
{alternative representations should be used when the matrix is dense and no assumptions can be made on the weight distribution.}
On the other hand, in the next corollary we show that the \HAC\  becomes convenient when
{a relatively small number $k$ of distinct values are contained in $\bW$, as customary with}
quantized matrices (see Sect.~\ref{sec:compression-techniques}).}
\begin{corol}\label{corol1}
{
If $\bW$ is dense and composed of $k < nm$ distinct values,
\begin{equation*}
|\HAC(\bW)| \leq nm(1+\log k) + 6 kb .
\end{equation*}
}
\end{corol}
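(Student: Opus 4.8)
The plan is to mirror the proof of Fact~\ref{fact1}, but to track the reduced number of distinct symbols $k$ instead of $nm$. The structure of the argument has two independent contributions to $|\HAC(\bW)|$: the bits used by the Huffman-coded bit stream itself, and the bits used to store the two dictionaries for $H_{\bW}$ and $H_{\bW}^{-1}$. I would bound these two pieces separately and add them.

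First I would bound the bit stream. Since $\bW$ is dense, every one of the $nm$ matrix entries contributes exactly one codeword to $\HAC(\bW)$. Because there are only $k$ distinct values, the source alphabet has size $k$, so the entropy satisfies $\mathcal{H} \leq \log k$ (with equality for a uniform distribution over the $k$ symbols). By the Shannon/Huffman bound quoted in the excerpt, the average codeword length $\overline H_{\bW}$ is at most $\mathcal{H} + 1 \leq 1 + \log k$. Summing over all $nm$ entries, the coded stream occupies at most $nm(1 + \log k)$ bits, which accounts for the first term.

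Next I would bound the dictionary cost. Here the key observation is that each dictionary has only $k$ entries rather than $nm$, since we need to record the codeword for each of the $k$ distinct values (plus the zero codeword, but density means we may fold that into the $k$ bookkeeping constant just as Fact~\ref{fact1} absorbed the $q+1$-vs-$nm$ discrepancy). Reusing the per-entry accounting from Fact~\ref{fact1} — namely $2b$ bits to store each pair $(z, H_{\bW}(z))$ and at most $b$ bits for a B-tree pointer, giving $3b$ bits per entry per dictionary, hence $6b$ bits per entry across both dictionaries — I get at most $6kb$ bits of dictionary overhead. Adding the two contributions yields $nm(1 + \log k) + 6kb$, which is the claimed bound.

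The only real subtlety, and the point I would state carefully rather than gloss over, is the justification that the dictionary size scales with $k$ and not with $nm$: the codewords are assigned per \emph{distinct value}, so although the stream encodes $nm$ symbols, the mappings $H_{\bW}$ and $H_{\bW}^{-1}$ each store only $k$ key-value pairs. Everything else is a direct reuse of the quantities already established in Fact~\ref{fact1}, so I expect the corollary to be short, with the entropy step $\mathcal{H} \leq \log k$ and the reindexing of the dictionary count from $nm$ to $k$ being the two substantive moves.
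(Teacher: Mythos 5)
Your proposal is correct and follows essentially the same route as the paper's proof: bound the stream by $nm(1+\log k)$ via the entropy of a $k$-symbol alphabet and the Huffman $\mathcal{H}+1$ bound, then add $6kb$ for the two $k$-entry B-tree dictionaries using the per-entry accounting from Fact~\ref{fact1}. If anything, your phrasing $\mathcal{H}\leq\log k$ for an arbitrary distribution is slightly more careful than the paper's appeal to the equiprobable worst case, but the substance is identical.
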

\begin{proof}
{In the worst case, all symbols are equally probable, the source entropy is $\mathcal{H} = \log k$, so that to represent the $nm$ values $nm(1+\log(k)) + 6 kb$ are required at most, including the overhead due to the dictionaries for the $k$ codewords.}
\end{proof}
{Corollary \ref{corol1} leads to
the following upper bound:
{
\begin{equation}\label{psi_hac}
\psi_{\HAC} \leq \frac{1+\log k}{b} + \frac{6k}{nm}
\end{equation}
}
with reference to the uncompressed matrix, where, as expected, for small $k$ the first term
is more relevant, while the second term grows faster with $k$}.

\begin{algorithm}[H]
\cornersize*{0pt}
{
\caption{Dot procedure for \HAC\ representation.}
\label{fig:pseudocode.HAC}
\textbf{Procedure} {\tt Dot$_{\HAC}$}\\
\noindent
{\tt Input}:
compressed array $\mathcal{C}_{\HAC}(\bW)$;
decoding dictionary $H_{\bW}^{-1}$;
vector $\bx \in \mathbb{R}^{n\times 1}$;
number of compressed words $N$;\\[2mm]
{\tt begin algorithm}
\begin{algorithmic}[1]
\STATE Initialize: $out:=$zeros$(n)$, $row:=1$, $col:=1$\\
\indb\inda\inda\inda\inda $sum:=0$, $rem:=null$, $oset:=0$
\FOR{$i=1$ to $N$}
  \STATE $S := $ getBinarySeq($\mathcal{C}_{\HAC}(\bW)[i]$)
  \WHILE{$[oset, rem, z]:=$ NCW($S, rem, oset)\neq null$}
    \STATE $sum:=sum+x[row]*H_{\bW}^{-1}(z),\ row:=row+1$
    \IF{$row>n$}
      \STATE $row:=1,\ out[col] := sum$
      \STATE $col:=col+1,\ sum := 0$
    \ENDIF
  \ENDWHILE
\ENDFOR
\end{algorithmic}
{\tt end algorithm}\\
{\tt Output}: $out$, that is $\bx^T \bW$.
}
\end{algorithm}

\noindent\emph{Dot product}.
{The procedure {\tt Dot$_{\HAC}$} (Algorithm~\ref{fig:pseudocode.HAC}) shows how the dot product $\bx^T\bW$ can be computed when $\bW$ is represented through \HAC.
Each compressed word in  $\mathcal{C}_{\HAC}(\bW)$ is processed sequentially, obtaining its binary representation $S$ (line $3$),
which is scanned in a loop (lines $4$-$10$) to retrieve code words. The procedure NCW gets the next code word from $S$, considering an offset $oset$ and
possibly adding unprocessed bits from the previous word (stored in $rem$).}
{NCW returns $null$ when
it is not possible to detect a code word.
This means that it took two adjacent memory words to represent the next code word. When this happens, $rem$ is updated accordingly and the processing considers the next word at the successive iteration (line $2$). Zero-padding is also handled by the procedure. Subsequently, the weight for the detected code word is computed and multiplied by the corresponding element in $\bx$, and accumulated in $sum$ (line $5$). Thus only one weight at a time is kept in main memory.}
The time complexity of the external $N$ iterations is $\mathcal{O}(|\mathcal{C}_{\HAC(\bW)}|) = \mathcal{O}(Nb)$ (line $3$), amounting in worst case (uniform frequencies of distinct weights) to $\mathcal{O}(nm\log k)$. Moreover, time complexity for lines $5$--$8$ is $\mathcal{O}(N)$, and that of executions of line $4$ is $\mathcal{O}(Nb\log k)$. Here we assume that
\begin{itemize}
    \item each time a bit is read from $S$, the current string is searched for in the dictionary (and in the worst case we need to entirely scan $S$), and
    \item searching in a dictionary of $k$ entries can be performed in $\mathcal{O}(\log k)$ time.
\end{itemize}
Summing up, the overall time complexity is $\mathcal{O}(nm\log k)$. In Sect.~\ref{sub:pardot} we describe
{how the procedure Dot$_{\HAC}$ can be reworked in order to speed up the computations.}



 \subsection{Sparse Huffman address map compression}\label{sec:sham}
{\HAC\ marginally benefits from sparsity: indeed, in such a case the space occupancy is only indirectly reduced, because now the symbol $0$ has higher frequency and the Huffman code is more compact. But when $\bW$ is sparse and very large,
we would use in any case
a high amount of memory (e.g., $10(1-s)$ GB for a $10^5\times 10^5$ matrix). To address this issue,
we propose to extend
\HAC\
into the novel \emph{sparse Huffman Address Map compression} (\sHAC),}
{in which the bit stream and the Huffman code are computed excluding the symbol $0$. More precisely, $\bW$ is represented using a bitwise CSC format, obtaining
$\bnz, \bri, \bcb$ as in Sect.~\ref{sub:CSC}: the first vector is stored using \HAC,
the others are kept uncompressed.}
{The Huffman code $H_{\bnz}$ for non-null elements is computed, obtaining by concatenation the bit stream ${\HAC(\bnz)}= H_{\bnz}(nz_1) \ldots H_{\bnz}(nz_q)$, which is stored in the array $\mathcal{C}_{\HAC}(\bnz)$ of $N_1=\lceil {|{\HAC(\bnz)}|}/{b}\rceil$ memory words.}
{Finally, we build the \sHAC\ representation of $\bW$, denoted by $\sHAC(\bW)$, as the sequence of vectors $\mathcal{C}_{\HAC}(\bnz)$, $\bri, \bcb$. The following fact establishes an upper bound for $|\sHAC(\bW)|$.}
{
\begin{theorem}(\sHAC\ worst case)\label{fact2}
{If $\bW$ contains $s n m$ non-null elements, with $s\in [0,1]$ the ratio of non-zero entries in $\bW$,
\begin{equation*}
    |\sHAC(\bW)| \leq snm\big(1+\log(snm)\big) + b(7snm + m+1)
\end{equation*}
when B-trees are used to represent both dictionaries implementing the mappings $H_{\bnz}$ and $H_{\bnz}^{-1}$.}
\end{theorem}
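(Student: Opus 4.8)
The plan is to exploit that $\sHAC(\bW)$ is, by construction, the concatenation of the three arrays $\mathcal{C}_{\HAC}(\bnz)$, $\bri$ and $\bcb$, so that $|\sHAC(\bW)|$ decomposes additively and it suffices to bound each piece separately, reusing the argument of Fact~\ref{fact1} for the Huffman-coded component.

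First I would bound $|\HAC(\bnz)|$. The vector $\bnz$ holds exactly $q = snm$ entries, hence the reasoning of Fact~\ref{fact1} applies verbatim with $snm$ playing the role of $nm$. In the worst case the $snm$ non-null values are pairwise distinct, so the source entropy is maximal, $\mathcal{H} = \log(snm)$, the code $H_{\bnz}$ has average codeword length at most $1+\log(snm)$, and the stream occupies at most $snm(1+\log(snm))$ bits; the two B-tree dictionaries for $H_{\bnz}$ and $H_{\bnz}^{-1}$ cost $6b$ bits per distinct symbol, i.e.\ at most $6snmb$ bits. This gives $|\HAC(\bnz)| \leq snm(1+\log(snm)) + 6snmb$. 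The two uncompressed arrays are then immediate: $\bri$ stores one index per non-null entry, i.e.\ $snm$ words contributing $snmb$ bits, while $\bcb$ has $m+1$ entries contributing $(m+1)b$ bits. Summing the three contributions and collecting the $b$-terms,
\begin{equation*}
b(6snm + snm + m + 1) = b(7snm + m + 1),
\end{equation*}
reproduces the claimed bound exactly.

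The only delicate point is verifying that the pairwise-distinct configuration is indeed the worst case. For a fixed number $q = snm$ of non-null entries and $k \leq q$ distinct values, the code length equals $q\,\overline{H}_{\bnz} \leq q(1+\mathcal{H})$ with $\mathcal{H} \leq \log k$, and the dictionary overhead scales with $k$; choosing $k = q$ simultaneously maximizes the entropy (to $\log(snm)$) and the number of dictionary entries (to $snm$), so no distribution of the non-null values can exceed the bound. I expect this monotonicity observation to be the main thing to state carefully, the rest being a direct transcription of Fact~\ref{fact1}.
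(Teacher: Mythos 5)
Your proof is correct and follows essentially the same route as the paper's: decompose $|\sHAC(\bW)|$ into $\mathcal{C}_{\HAC}(\bnz)$, $\bri$, $\bcb$, bound the Huffman stream via the maximal-entropy (all-distinct) case at $snm(1+\log(snm))$ bits, charge $6snmb$ bits for the two B-tree dictionaries, and add $b(snm+m+1)$ for the uncompressed index vectors. Your closing monotonicity argument (that $k=snm$ distinct values dominates any $k<snm$) is a small welcome addition that the paper leaves implicit when it simply asserts the worst case.
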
}
\begin{proof}
{In the worst case, all $snm$ symbols are distinct, $\mathcal{H} = \log (snm)$, and the average codeword length of $H_{\bnz}$ is upper bounded by $1+\log (snm)$, thus
\begin{equation*}
    |\mathcal{C}_{\HAC}(\bnz)|\leq snm(1+\log snm).
\end{equation*}
The dictionaries $H_{\bnz}$, $H_{\bnz}^{-1}$ require $6snmb$ bits, vectors $\bri$, $\bcb$ require $b(snm+m+1)$ bits, and the thesis follows.}
\end{proof}
From {Fact~\ref{fact2} the
occupancy ratio
for \sHAC\ is such that
\begin{equation*}
    \psi_{\sHAC} \leq \frac{s(1+\log snm)}{b} + 7s + \frac{m+1}{nm},
\end{equation*}
and using the same argumentations given for \HAC, we can
show that also \sHAC\ benefits from the matrix quantization.}
\begin{corol}\label{corol2}
{In the same hypotheses of Fact~\ref{fact2}, if $\bW$ contains $k < nm$ distinct values,
\begin{equation*}
    |\sHAC(\bW)| \leq snm(1+\log k) + b(6k + snm + m + 1).
\end{equation*}}
\end{corol}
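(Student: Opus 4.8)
The plan is to mirror the proof of Fact~\ref{fact2} exactly, adjusting only those quantities that depend on the \emph{number of distinct} non-null values, in the same way Corollary~\ref{corol1} specialises Fact~\ref{fact1}. I would start from the three-part decomposition of $|\sHAC(\bW)|$ already used in Fact~\ref{fact2}: the compressed bit stream $\mathcal{C}_{\HAC}(\bnz)$, the pair of dictionaries realising the mappings $H_{\bnz}$ and $H_{\bnz}^{-1}$, and the uncompressed index vectors $\bri,\bcb$. The whole argument then reduces to re-bounding each part under the weaker hypothesis that the $snm$ non-null entries take only $k<nm$ distinct values.

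For the bit stream, the key observation is that the \emph{length} of $\bnz$ is unchanged --- there are still $snm$ non-null symbols to encode --- so only the per-symbol cost improves. With at most $k$ distinct values, the worst case is a uniform distribution over them, giving source entropy $\mathcal{H}=\log k$; by the Huffman bound $\mathcal{H}\le\overline H_{\bnz}\le\mathcal{H}+1$ recalled in Sect.~\ref{sub:ham}, the average codeword length is at most $1+\log k$. Summing over the $snm$ symbols yields $|\mathcal{C}_{\HAC}(\bnz)|\le snm(1+\log k)$, which replaces the term $snm(1+\log snm)$ of Fact~\ref{fact2}.

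For the dictionaries I would note that each now stores only $k$ codewords rather than $snm$; reusing the same B-tree accounting as in Fact~\ref{fact1} ($3b$ bits per entry, i.e.\ $2b$ for the pair $(z,H_{\bnz}(z))$ plus at most $b$ for a pointer), the two dictionaries cost at most $6kb$ bits, replacing the $6snmb$ term. The index vectors $\bri,\bcb$ are untouched by the distinct-value hypothesis and still contribute $b(snm+m+1)$ bits. Adding the three bounds gives $snm(1+\log k)+b(6k+snm+m+1)$, which is exactly the claim.

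I do not expect a genuine obstacle here, since the result is a direct specialisation of Fact~\ref{fact2}. The only point that warrants care --- and the place where a careless transcription could go wrong --- is to keep the two roles of $snm$ and $k$ cleanly separated: the factor $snm$ persists wherever it counts the \emph{number of encoded symbols} (the bit-stream length and the $\bri$ vector), whereas it collapses to $k$ only where it counts the \emph{number of distinct codewords} (the dictionary size and the $\log$ inside the entropy bound). Verifying that $k$ indeed enters only these two places, and nowhere in the CSC overhead, is the one check I would make explicit.
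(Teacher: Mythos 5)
Your proof is correct and follows essentially the same route as the paper's: the worst case of $k$ equiprobable symbols gives entropy $\log k$ and hence at most $snm(1+\log k)$ bits for the encoded stream, the B-tree dictionaries shrink to $6kb$ bits for the $k$ codewords, and $\bri$, $\bcb$ contribute the unchanged $b(snm+m+1)$ bits. Your explicit remark about where $snm$ (number of encoded symbols) versus $k$ (number of distinct codewords) enters is a useful clarification, but the argument itself is the paper's own.
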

\begin{proof}
{As in Corollary~\ref{corol1}, $\mathcal{H} = \log k$, and the $snm$ values can be represented using at most $snm(1+\log k)$ bits. The result follows because $6kb$ bits are needed by the dictionaries for the $k$ codewords, and $\bri$ and $\bcb$ require $b(snm+m+1)$ bits.}
\end{proof}

{The
occupancy ratio
for \sHAC\ is in this case such that
\begin{equation}\label{psi_shac}
    \psi_{\sHAC} \leq \frac{s(1+\log k)}{b} + \frac{6k+m+1}{nm} + s,
\end{equation}
where the ratio of non-zero entries $s$ appears as last term, as well as in the first one, where it is scaled by the upper bound of $\psi_{\HAC}$, emphasizing a gain when the sparsity of $\bW$ increases.}

\begin{algorithm}[H]
{
\caption{Dot procedure for \sHAC\ representation.}
\label{fig:pseudocode.sHAC}
\textbf{Procedure} {\tt Dot$_{\sHAC}$}\\
\noindent
{\tt Input}:
compressed array $\mathcal{C}_{\sHAC}(\bnz)$;
row index vector $\bri$;
vector $\bcb$;
vector $\bx \in \mathbb{R}^{n\times 1}$;
decoding dictionary $H_{\bnz}^{-1}$;
number of compressed words $N_1$;


{\tt begin algorithm}
\begin{algorithmic}[1]
\STATE Initialize: $out:=$zeros$(n)$, $pos:=1$, $col:=1$\\
\indb\indb\inda\inda $sum:=0$, $rem:=null$, $oset:=0$\\
\FOR{$i=1$ to $N_1$}
  \STATE $S := $ getBinarySeq($\mathcal{C}_{\sHAC}(\bnz)[i]$)
  \WHILE{$[rem, oset, z]:=$ NCW($S, rem, oset)\neq null$}
    \WHILE{$cb[col+1] = pos$}
      \STATE $col := col + 1$, $out[col] := 0$
    \ENDWHILE
    \STATE $sum:=sum+x[ri[pos]]*H_{\bnz}^{-1}(z)$, $pos:= pos+1$
    \IF{$cb[col+1] = pos$}
      \STATE $out[col] := sum$
      \STATE $sum := 0,\ col := col +1$
    \ENDIF
  \ENDWHILE
\ENDFOR
\end{algorithmic}
{\tt end algorithm}\\
{\tt Output}: $out$, that is $\bx^T \bW$.
}
\end{algorithm}

{From Eqs.(\ref{psi_hac}) and (\ref{psi_shac}) it follows $\psi_{\sHAC} < \psi_{\HAC}$ when
\begin{equation*}
    s < \frac{\frac{1+\log k}{b}- \frac{m+1}{nm} }{1 + \frac{1+\log k}{b}}.
\end{equation*}
}

\noindent{\emph{Dot product}}.
Algorithm~\ref{fig:pseudocode.sHAC} shows the dot product $\bx^T\bW$ when $\bW$ is represented through \sHAC. The compressed words of $\mathcal{C}_{\sHAC}(\bnz)$ are extracted sequentially, computing each time their binary representation (line $3$) and detecting code words (lines $4$--$13$). Here, NCW is the same procedure as in {\tt Dot$_{\HAC}$}, whereas the loop at lines $5$--$7$ possibly skips empty columns. The variable $pos$ contains the position of the current element in $\bnz$. The weight for the detected codeword is computed in line $8$ and multiplied by the corresponding element in $\bx$, finally updating
the cumulative value contained in $sum$.
{The required time complexity is
\begin{inparaenum}[(i)]
    \item $\mathcal{O}(N_1 b)= \mathcal{O}(snm\log k)$ for the $N_1$ iterations in the external loop (line $3$),
    \item $\mathcal{O}(snm\log k)$ for line $4$,
    \item $\mathcal{O}(m)$ for the loop in lines $5$--$7$,
    \item $\mathcal{O}(N_1)$ for lines $8$--$12$.
\end{inparaenum}
Summing up, the overall time complexity is $\mathcal{O}(snm\log k)$. The following subsection describes how to speed up the {\tt Dot$_{\HAC}$} and  {\tt Dot$_{\sHAC}$} procedures.}

\subsection{{Speeding up the dot product of \HAC\ and \sHAC}}\label{sub:pardot}
The procedures {{\tt Dot$_{\HAC}$} and} {\tt Dot$_{\sHAC}$} can be adapted to parallel computation by exploiting the parallel nature of matrix multiplication, since any row of the left operand can undergo independently of the other rows to the dot product with the columns of the other operand. {Indeed, given two matrices $\bX$ and $\bW$, with $\bW$ compressed with either \HAC{} or \sHAC{}, the evaluation of $\bX^T\bW$ can be distributed across $q$ computing units by considering $q$ chunks of rows of $\bX$, performing $q$ dot products $\bX_i^T\bW, i \in \{1,..,q\}$,  and finally aggregating the output matrix. $\bX_i$ here denotes the submatrix composed of the rows of  $\bX$ in the $i$-th chunk. By definition, there is no data dependency between each product $\bX_i^T\bW$,
hence each chunk can be evaluated concurrently.
The pseudo-code of the procedure is shown in Algorithm~\ref{code:parallel}. Line $2$ computes the
row indices of the
matrix to be assigned to each computing unit hence $Idx$ contains $q$ tuples. In lines $3$--$9$ each computing unit (concurrently) contributes to $k$ lines of $out$ (except for the last one if $n$ is not a multiple of $q$) by, in total, evaluating $k \times t$ dot products, given $t$ the number of (encoded) columns of $\bW$.

\begin{algorithm}[H]
{
\caption{Pseudocode of the {parallel matrix multiplication} for \HAC{} representation.
}
\label{code:parallel}
\textbf{Procedure} {\tt ParDot$_{\HAC}$}\\
\noindent
{\tt Input}:
compressed array $\mathcal{C}_{\HAC}(\bW)$;
decoding dictionary $H_{\bW}^{-1}$;
expanded matrix $\bX \in \mathbb{R}^{n\times m}$;
number of compressed words $N$;
number of computing units $q$;\\
%
{\tt begin algorithm}
\begin{algorithmic}[1]
\STATE Initialize: $out:=$zeros$(n,t)$; $k := \lceil \frac{n}{q}\rceil$ \\
\STATE $Idx = [(1,k),(k$+$1,2k), \dots ,(k(q$-$1)$+$1, n)]$
\FOR{$(startIdx,endIdx)$ in $Idx$ in parallel}
  \FOR{$i = startIdx$ to $endIdx$}
    \STATE $\bx$ = $\bX[i,:]$
    \STATE $y_i$ = {\tt Dot$_{\HAC}$}$(\mathcal{C}_{\HAC}(\bW), H_{\bW}^{-1}, \bx, N)$
    \STATE $out[i,:]$ = $y_i$
  \ENDFOR
\ENDFOR
\end{algorithmic}
{\tt end algorithm}\\
\noindent
{\tt Output}: $out$, that is $\bX^T \bW$.
}
\end{algorithm}

 Note that the analogous parallel version for \sHAC\ can be derived
 by adding as inputs  $\mathcal{C}_{\sHAC}(\bnz)$, the row index vector $\bri$ and the vector $\bcb$, and by invoking in line 6 $y_i$ = {\tt Dot$_{\sHAC}$}$(\mathcal{C}_{\sHAC}(\bW)$, $\bri$, $\bcb$, $H_{\bW}^{-1}$, $\bx$, $N$).}
{To test the viability of this approach, we implemented this strategy by means of CPU multi-threading. However, the Python global interpreter lock poses severe limitations to multi-threaded execution.
To overcome this limitation, {\tt ParDot$_\HAC$} has been implemented in C++, which provides no limitations to
multi-threading, while the rest of the code is executed within Python, using the Pybind11}\footnote{Available at: \url{https://github.com/pybind/pybind11}} library for Python / C++ interoperability.

\section{Experiments and Results}\label{sec:experiments}
{In this section we empirically compare the illustrated techniques. The experiments considered four datasets and two uncompressed neural networks, as detailed here below.}

\subsection{Data}\label{sub:data}
\begin{itemize}
    \item \emph{Classification}.
    MNIST~\cite{MNIST}, a benchmark of handwritten digits,
    {containing a train set of 60K 28x28 grayscale images and a test set of 10K analogous images;}
    CIFAR-10~\cite{Krizhevsky09learningmultiple}, a dataset of
    {50K (train set) + 10K (test set)}
    32x32 color images.
    Both datasets refer to ten classes (one for each digit) and their labels are balanced.

    \item \emph{Regression}. DAVIS~\cite{Davis11} and KIBA~\cite{KIBA14},
    {datasets containing the evaluation of the}
    affinity between drugs (ligands) and targets (proteins),
    respectively represented using the amino acid sequence and the SMILES (Simplified Molecular Input Line Entry System)
    {string encoding.}
    DAVIS and KIBA contain, respectively, $442$ and $229$ proteins, $68$ and $2111$ ligands, $30056$ and  $118254$ total interactions {between them, with \nicefrac{1}{6} of the data composing the test set}.
\end{itemize}
\subsection{Benchmark models}\label{sub:models}
{We used to publicly available pre-trained, top-performing CNN models:}
{
\begin{inparaenum}[(i)]
\item \emph{VGG19} \cite{Simonyan15}, consisting of $16$ convolutional layers followed by a fully-connected (FC) block, in turn containing two hidden layers of $4096$ neurons each, and a softmax output layer\footnote{\url{https://github.com/BIGBALLON/cifar-10-cnn}.}, trained on CIFAR-10 and MNIST datasets;
\item \emph{DeepDTA} \cite{DeepDTA},  having two separate blocks for proteins and ligands, both containing three convolutional layers followed by a max pool layer and merged in a FC block consisting of three hidden layers respectively containing $1024$, $1024$, $512$ units, and a single-neuron output layer\footnote{\url{ https://github.com/hkmztrk/DeepDTA}.}.
\end{inparaenum}
}
Using pre-trained networks allows a fair analysis of compression and storage techniques, without introducing potential biases in the model selection and training procedures. Moreover, when fine-tuning weights after quantization, we preserved the same training configuration set up by the model proponents in their original work.

\subsection{Evaluation metrics}\label{sub:metrics}
{We performed comparisons focusing on the following metrics:
\begin{inparaenum}
    \item \textit{Accuracy} for classification and \textit{MSE} for regression (as in original papers) or the \textit{difference} $\Delta_\mathrm{perf}$ between performances of compressed and uncompressed models;
    \item ratio $time$ between evaluation times of uncompressed and compressed model, and
    \item occupancy ratio $\psi$ (cfr.\ Sect.~\ref{sub:CSC}).
\end{inparaenum}
}
{When only partly compressing the NN, time and space performance only account for the actually compressed layers.}
{The rest of the paper assesses the effectiveness of compression techniques in three scenarios, namely compressing: \begin{inparaenum}
    \item only FC layers,
    \item only convolutional layers, and
    \item both layer types.
\end{inparaenum}
}
\subsection{Software implementation}
The code retrieved for baseline NNs was implemented in Python 3, using Tensorflow
and Keras.
{We used the same environment for implementing compression and retraining (although the parallel dot procedures are written in C++). The software is distributed as a standalone Python 3 package\footnote{Source code, datasets and trained baseline networks are  available at \url{https://github.com/AnacletoLAB/sHAM}.}}.

\subsection{Results}\label{sub:res}
{We conducted multiple evaluations, analyzing the proposed compression and storage methodologies from different points of view, as described in the following dedicated subsections.
\subsection{Preliminary results from previous studies} \label{subsub:icpr}
{{In this subsection we summarize the results obtained when only compressing FC layers via
CWS and PWS,
separately considering each layer~\cite{HAMICPR}.}
}
\paragraph{Compression techniques setup}
{the schemes corresponding to pruning (Pr), CWS, PWS, Pr-CWS, Pr-PWS have been been tested on data and models described in Sects.~\ref{sub:data} and~\ref{sub:models} using the hyperparamter tuning described here below.}
\begin{itemize}
\item \emph{Pruning}.
{The percentile level $p$ was chosen in the set $\{30,  40,  50, 60,  70,  80,  90,  95,  96,  97,  98, 99\}$\footnote{levels smaller than $50$, although not guaranteeing occupancy $<1$,  were included because potentially useful in the combinations Pr-CWS and Pr-PWS.}; }
\item \emph{CWS}.
{The number $k$ of representatives for VGG19 was selected in $\{2, 32, 128, 1024 \}$ for the first two FC layers of and between $2$ and $32$ for the (smaller) output layer; as DeepDTA is more compact, we set $k \in \{2, 32, 128 \}$ in the three FC layers and $k \in \{ 2, 32 \}$ for the output layer.
}
\item \emph{PWS}.
{To have a fair comparison, $k$ was set as in CWS.}
\item \emph{Pr-X}. The combined application of pruning followed by the quantization $X \in \{\mathrm{CWS}, \mathrm{PWS}\}$ was tested in two variants: a)
selection of best $p$ in terms of $\Delta_\mathrm{perf}$,
and then tuning of $X$ as in previous points;
b) the vice-versa.
\end{itemize}

\paragraph{Fine-tuning of compressed weights}
post-compression retraining was done using the same configuration as in original training.
Data-based tuning was applied only to learning rate
($3\cdot10^{-4}$ for pruning, $10^{-3}$ and $10^{-4}$ for PWS, CWS, and combined schemes), and maximum number of epochs, set to  $100$. As explained in Sect.~\ref{sub:ham}, in the experiments using only pruning the CSC representation is adopted. \\

\paragraph{Performance assessment}
{Table~\ref{original_res} reports the testing performance of uncompressed models as a baseline. The top performance for each compression technique, and its configuration, is shown in Supplementary Table S1, whereas
{the configuration improving the baseline (when existing) having the smallest memory requirement}
is shown in Supplementary Table S2. WQ performed better than pruning for classification}, with PWS and CWS having the top performance on MNIST and CIFAR-10, respectively. Remarkably, all techniques outperformed the baseline, while exhibiting
{effective}
compression rates.
Regression behaved similarly, however here
pruning was preferable, and PWS never improved the baseline on KIBA.
Improvements were particularly
{interesting}
on DAVIS (up to around $30\%$ of baseline).
The largest compression rate (preserving accuracy) was achieved on the biggest net, VGG19, attaining more than $150\times$ on CIFAR-10 (Pr/PWS-b and \sHAC).
When applied to DeepDTA, Pr/PWS-a improved the baseline MSE of $ 17.1\% $, while compressing around $18\times$.
\begin{table}[]
\centering
\caption{Testing performance of original non-compressed models. \emph{Performance} shows  accuracy for MNIST/CIFAR-10 and  MSE for  KIBA/DAVIS. \emph{Time} is the overall testing time.}
\label{original_res}
\begin{tabular}{|l|l|rr|}
\hline
\rowcolor[HTML]{B6D7A8}
\multicolumn{1}{|c|}{\cellcolor[HTML]{B6D7A8}\textbf{Net}} & \multicolumn{1}{c|}{\cellcolor[HTML]{B6D7A8}\textbf{Dataset}} & \multicolumn{1}{c|}{\cellcolor[HTML]{9FC5E8}\textbf{Performance}} & \multicolumn{1}{c|}{\cellcolor[HTML]{9FC5E8}\textbf{Time (s)}} \\ \hline
 & MNIST & 0.9954 & 0.888 \\ \cline{2-2}
\multirow{-2}{*}{VGG} & CIFAR-10 & 0.9344 & 0.897 \\ \hline
 & KIBA & 0.1756 & 0.175 \\ \cline{2-2}
\multirow{-2}{*}{DeepDTA} & DAVIS & 0.3223 & 0.040 \\ \hline
\end{tabular}
\end{table}


{A summary of space occupancy, time ratio, and testing performance for all used hyper-parameter configurations is shown in Supplementary Fig.~S1,
where \sHAC\ is used, except for techniques producing denser matrices, where \HAC\ was more convenient. The time reported is relative to the sequential dot procedure for \HAC\ and \sHAC}. CWS and PWS combinations are reported in increasing order: first the ones with $k=2$ in the first layer (label $2$), then those with $k=32$ in the first layer (label $32$) and so on. On CIFAR-10 and DAVIS, most compression techniques outperformed the baseline, and this was likely due to overfitting, since on training data they show similar results.
{On MNIST and KIBA, although the baseline was
only seldom improved,
the compressed model used much less parameters, confirming the trend obtained in \cite{Han15}. Binary quantization ($k=2$) achieved the lowest $\psi$, yet a worse performance, whereas already with $k=32$ the baseline was improved on almost all datasets}. \sHAC\ occupancy, as expected, decreased inversely with $p$, along with the time ratio, approaching in turn to $1$ (same testing time). The high time ratios, for some configurations, reflected the fact that the {\tt dot} procedure was slower than the \emph{Numpy} {\tt dot} used by baseline and leveraging parallel computation.

{As pointed out
in~\cite{HAMICPR}, these results did not highlight a compression technique better that the remaining ones. Weight pruning seemed to be preferable for regression, whereas quantization performed better in the classification setting. Overall, the most remarkable achievement can be considered the fact that compression techniques providing the lowest occupancy, i.e., those combining weight pruning  and quantization, still achieved competitive or better performance than the baseline.}

\subsection{State-of-the-art comparison of \HAC\ and \sHAC}\label{sub:soa}
\begin{figure*}[!th]
\centering
\includegraphics[width=0.765\textwidth]{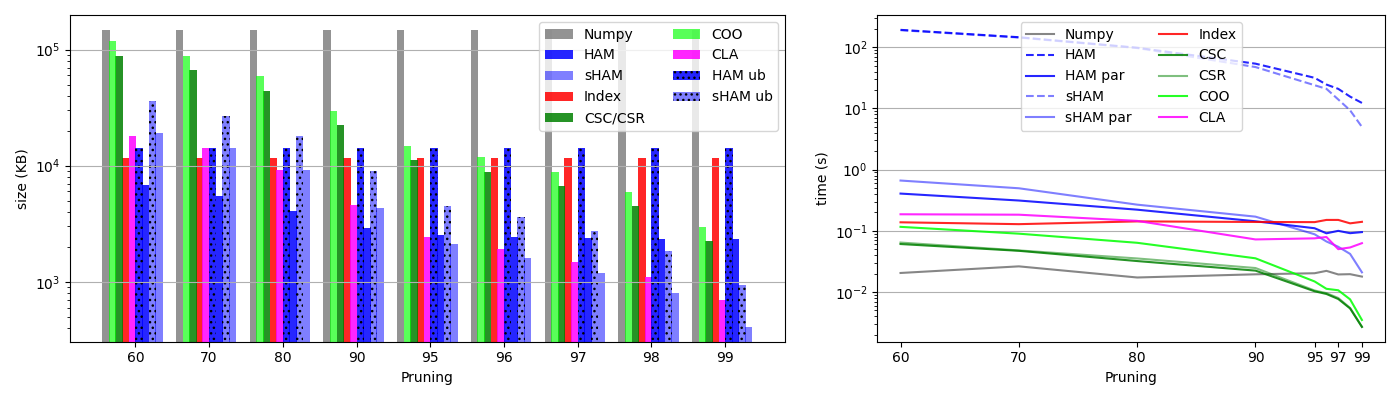}
\includegraphics[width=0.765\textwidth]{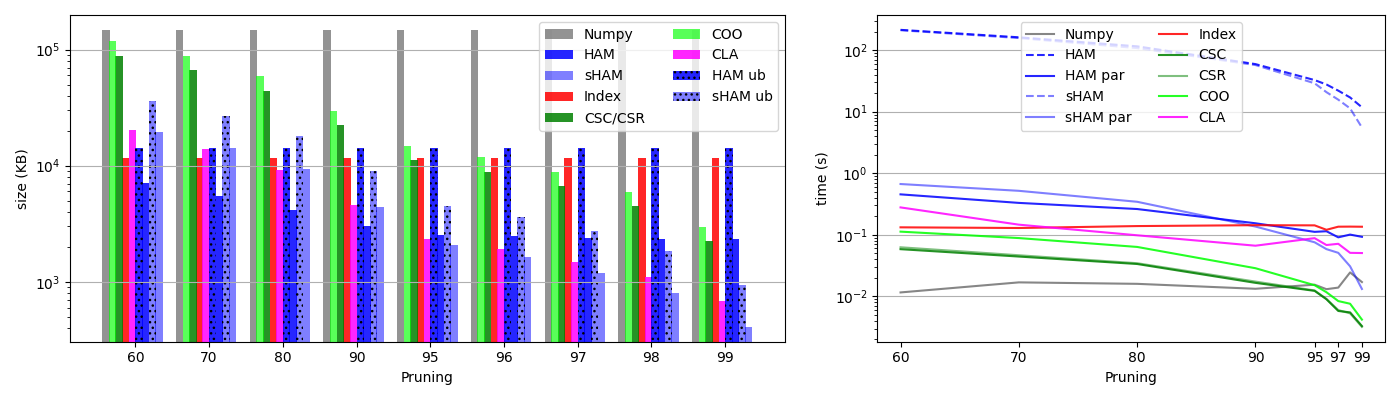}
\caption{Average execution times in seconds (graphs on the right) for performing $8$ vector matrix product, and memory footprint in KiloBytes (graphs on the left) for storing the compressed representation of the three fully connected layer weight matrices of the VGG19 model. Top rows refer to the model trained on CIFAR-10 dataset, bottom row on MNIST dataset. All matrices were subjected to different degrees of pruning and were quantized by weight sharing (CWS) with 32 values. Dotted bars represent the upper bound of \HAC{} and \sHAC{} sizes evaluated with Corollaries \ref{corol1} and \ref{corol2}.  Times are reported in logarithmic scale.}\label{fig:cifarMnistTimeSpace}
\normalsize
\end{figure*}

{In order to assess the efficacy of the proposed formats, a suite of state-of-the-art matrix compression methods has been considered; 
{such methods allow the execution of linear algebra operations, such as matrix-vector multiplication, directly on the compressed representation.}
{The comparison takes into account both space reduction and multiplication time on the compressed format, and includes the following methods:
{
\begin{itemize}
\item \emph{Compressed Linear Algebra} (CLA), a compendium of effective column compression schemes, cache-conscious operations, and {a} sampling-based compression algorithm to select the compression scheme more suitable for each column or group of columns;
this approach achieves performance close to the uncompressed case and compression ratios similar to heavyweight formats such as Gzip, but better
than lightweight formats like Snappy~\cite{CLA};
\item three baseline formats: CSC (cfr.\ Sect.~\ref{sub:CSC}), \emph{Compressed Sparse Row Format} (CSR), analogous to CSC but storing the column indices of nonzero values, instead of their rows~\cite{CSR}, and \emph{Coordinate list} (COO), storing non zero elements along with their row and column indices;
\item \emph{index map} (IM)~\cite{Han15} (see Sect.~\ref{subsub:CWS}), proposed expressly in the context of neural network compression.
\end{itemize}
}
As baseline reference for time, we also evaluated
{the \texttt{dot} function}
of the \emph{Numpy} library, which is also used by the dot product of the IM method, along with the additional {access time}
to the vector of full precision weights.
CLA is implemented in Java, and the code is available upon request from the authors.}
CSC, CSR and COO
implementations
are taken from the \textit{Scipy} library. \HAC\ and \sHAC\ dot is called within Python, but its underlying implementation is done in C++ (cfr.\ Sects~\ref{sub:ham} and~\ref{sec:sham}).
The test is performed by calculating the dot product between a sparse matrix and a dense vector, without expanding the matrix.
For this reason, we excluded from this evaluation other compression techniques that needed to expand the matrix to perform fine-tuning after quantizing,
such as the \emph{Tensorflow Lite Converter}}\footnote{\url{https://www.tensorflow.org/lite/convert}}.

{The evaluation involves the weight matrices of the three FC layers composing VGG19, trained on CIFAR-10 and MNIST, after such layers underwent pruning (various levels, from $60\%$ to $99\%$) and CWS with $k = 32$ and $256$ (as examples of less and more marked quantization)---as in average CWS tends to perform better than PWS on this data, as shown in Sect.~\ref{subsub:icpr}). 
Lower pruning rates were meaningless,
since we are evaluating compression for sparse matrices}.
The dimensions of matrices for the three FC layers are $512 \times 4096$, $4096 \times 4096$ and $4096 \times 10$.
The resulting size is the sum of the memory footprint of all involved structures (vectors, lists, dictionaries, etc.).}

{
Comparison of the dot product performance was done generating
$8$ dense vectors {according to a uniform distribution in $[0,1]$,} and executing the operation
{using as arguments each generated vector and the compressed matrix.}
The overall time is the sum of the $8 \times 3$ dots ($8$ dot products for each matrix). Executions were multi-threaded ($8$ threads per dot product).}

Results are summarized on the graphs of Fig.~\ref{fig:cifarMnistTimeSpace} for $k=32$ and in Fig.~S2 for $k=256$. Time is measured in seconds, size in Kilobytes.
{On both datasets, with lower pruning ($p\in[60,90]$) and $k=32$, \HAC\ shows the highest compression rate, from $\simeq 21\times$ ($p=60$) to $\simeq 50\times$ ($p=90$), whereas when the matrices get highly sparse ($p \geq 90$), \sHAC\ compresses the most---till around $361\times$ ($p=99$). IM, using 1 byte for each entry of the matrix when $k=32$, has better compression ratios than CLA and \sHAC\ only up to $p=70$. This is reasonable, since this method does not
exploit
the matrix sparsity (0 is just one of the $k$ values to be stored).
}
CSC, CSR and COO behave in a similar way, and always occupy more than CLA and \HAC/\sHAC, and less than IM only when $p \geq 0.95$.
On the other hand, such formats allow the fastest dot product (even faster than \emph{Numpy} when $p > 90$), as \emph{Scipy} is designed to perform dot products very efficiently. Among the remaining methods, with high pruning ($p > 90$) \HAC\ and \sHAC\ are faster than the other methods (except for \emph{Numpy} dot).
Below that boundary, IM is faster than CLA  for $p \leq 70$.
\emph{Numpy} often performs the fastest dot product, but it is useless since it operates on the uncompressed matrix.
Analogous trends are verified for $k=256$, with index map that compares little more favourably, since its space and time are almost the same (it still uses only 1 Byte for entries of $\bI$), while the other methods slightly increase both their space and time requirements.

Summing up, \HAC\ and \sHAC\ compress the most (\HAC\ for $p\leq 90$, \sHAC\ in the other cases), and \sHAC\ is the best in time for $p \geq 95$ (it is still faster than the \emph{Numpy} dot on the MNIST---$p=99$ case). For
low/medium pruning rates, IM and CLA exhibit a higher time efficiency than \HAC\ and \sHAC; these features are well expected, as our methods have been developed and optimized for obtaining state-of-the-art compression ratios when pruning and quantization is applied, at the cost of requiring a quite consistent pruning to show fast dot products.
Between IM and CLA, the former is preferable for low pruning rates in terms of both time and space.

Both figures also show the upper bounds introduced in Sects~\ref{sub:ham} and \ref{sec:sham}, evaluated using Corollaries~\ref{corol1} and~\ref{corol2}. Both techniques show an actual size consistently lower than the upper bound: on average, \sHAC{} is twice as lower, while \HAC{} ranges from $2\times$ to $6\times$ lower, depending on pruning ratio. So, the assumption of weights having the same frequency (used to compute upper bounds) is farther from being realistic,
and in practice the proposed formats are definitely more effective.
Tighter bounds might be found if more realistic assumptions are considered, which could be the subject of future studies.
\subsection{Evaluation of global and per-layer quantization}\label{sub:unifiedQuant}

The assessment performed in Sect.~\ref{subsub:icpr} made use of a \emph{non-unified} quantization, selecting a specific $k$
per layer and quantizing layers separately.  In essence, all weight matrices could be quantized at the same time using a unique value of $k$, leading to a variant that we name \emph{unified} quantization. Here we compare these two approaches. {To this end, the setting used in Sect.~\ref{subsub:icpr} is preserved, that is compressing only dense layers and measuring} performance in terms of accuracy/MSE and occupancy ratio. It is worth noting that
the non-unified case
has higher flexibility, counterbalanced by an increase of memory overhead, as each layer requires its own dictionaries. On the other hand, the unified case entails only one dictionary
but the
choice of $k$ represents an `one size fits all' solution, which reduces versatility
and might impact on performance.

First, the best models in terms of prediction performance from Supplementary Table S1 have been considered, meanwhile using the same compression techniques.
{The value for $k$ in the unified case was selected by summing the values of $k$ in the layers of the selected models in the non-unified case.}
The resulting four experiments are summarized in Table~\ref{tab:comparision-unique-vs}, where the modalities exploting the unified choice of $k$ are marked as ``uCWS'' and ``uPWS''.
\begin{table}[!t]
\centering
\caption{Comparison between unified  and non-unified quantization. $\psi$ refers to \HAC\ format. Baseline performance in brackets.}
\label{tab:comparision-unique-vs}
\resizebox{1 \columnwidth}{!}{
\begin{tabular}{|c|l|c|rr|}
\hline
\rowcolor[HTML]{B6D7A8}
\textbf{Net-Dataset} & \multicolumn{1}{c|}{\cellcolor[HTML]{B6D7A8}\textbf{Type}} & \textbf{Config} & \multicolumn{1}{c|}{\cellcolor[HTML]{9FC5E8}\textbf{Perf}} & \multicolumn{1}{c|}{\cellcolor[HTML]{9FC5E8}\textbf{$\psi$}} \\ \hline
 & CWS & 128-32-32 & 0.9957 & 0.3210 \\ \cline{2-2}
 & uCWS & 192 & 0.9957 & 0.2344 \\ \cline{2-2}
 & PWS & 32-32-2 & 0.9958 & 0.3090 \\ \cline{2-2}
\multirow{-4}{*}{\begin{tabular}[c]{@{}c@{}}VGG19-MNIST\\(0.9954)\end{tabular}} & uPWS & 66 & 0.9955 & 0.1857 \\ \hline
 & CWS & 32-32-2 & 0.9371 & 0.3060 \\ \cline{2-2}
 & uCWS & 66 & 0.9370 & 0.1856 \\ \cline{2-2}
 & PWS & 32-2-32 & 0.9363 & 0.0910 \\ \cline{2-2}
\multirow{-4}{*}{\begin{tabular}[c]{@{}c@{}}VGG19-CIFAR-10\\ (0.9344)\end{tabular}} & uPWS & 66 & 0.9366 & 0.1857 \\ \hline
 & CWS & 128-128-32-2 & 0.1679 & 0.3900 \\ \cline{2-2}
 & uCWS & 290 & 0.1609 & 0.2516 \\ \cline{2-2}
 & PWS & 32-128-128-32 & 0.1761 & 0.4250 \\ \cline{2-2}
\multirow{-4}{*}{\begin{tabular}[c]{@{}c@{}}DeepDTA-KIBA\\ (0.1756)\end{tabular}} & uPWS & 320 & 0.1631 & 0.2642 \\ \hline
 & CWS & 128-2-128-2 & 0.2320 & 0.2120 \\ \cline{2-2}
 & uCWS & 260 & 0.2291 & 0.2496 \\ \cline{2-2}
 & PWS & 128-32-32-32 & 0.2430 & 0.3240 \\ \cline{2-2}
\multirow{-4}{*}{\begin{tabular}[c]{@{}c@{}}DeepDTA-DAVIS\\ (0.3223)\end{tabular}} & uPWS & 224 & 0.2253 & 0.2469 \\ \hline
\end{tabular}
}
\end{table}
The results point up that such variants tend to compress more than their non-unified counterparts, having in some cases negligible performance decay (classification), or even improvements (regression). Specifically, uCWS compresses up to $1.64\times$ more than CWS (VGG19-CIFAR-10), and uPWS up to $1.66\times$ more than PWS (VGG19-MNIST), while almost preserving the accuracy ($-0.03\%$ and $-0.01\%$, respectively). Besides, uCWS improves the MSE up to $4\%$ on DeepDTA-KIBA (while compressing $1.55\times$ more), and uPWS up to the $8\%$ on DeepDTA-DAVIS, even with a compression ratio of $1.31\times$ higher.
Nevertheless,
the above mentioned flexibility of non-unified variants let them compress
more than the unified counterparts, mainly when they can use lower $k$ values in large layers. For instance, PWS has less than half occupancy compared to uPWS on VGG19-CIFAR-10 by using only two distinct weights for the central hidden layer of VGG19 (the largest one).
In order to confirm these observations, we repeated this experiment using the non-unified models reported in Supplementary Table S2 (best compression preserving the baseline performance), as shown in Supplementary  Table~S3. The trend of Table \ref{tab:comparision-unique-vs} is preserved, although here the
higher flexibility for non-unified methods is more marked (indeed they always compress more on classification). For this reason and to better evaluate the behavior of unified versions, we additionally tested them by varying $k \in \{2^i | 1 \leq i \leq 7\}$,
and reporting the setting achieving the best compression ratio among those performing at least as good as the non-unified counterparts.  
With this ``non-constrained'' setting, they show in most cases an occupancy much lower than the non-unified variants (e.g., $4\times$ smaller on VGG19-MNIST data). Notwithstanding, even in this case,
the non-unified variants can sometimes have a better compression (like on VGG19-MNIST data for the CWS method).

Overall, although both variants can be better than the corresponding counterparts in some specific cases,
unified approaches tend to achieve both better compression and higher performance; thus, hereafter this variant will be used.
\subsection{Comparison  of  quantization  techniques}\label{sub:SOA}
{
As unified quantization is preferable, it
has been applied to compare also the two quantization strategies described in Sect.~\ref{sub:WSquant}, namely UQ and ECSQ (uUQ and uECSQ for the unified version). The setting used up to now (compressing only FC layers and measuring accuracy/MSE and occupancy ratio) has been maintained. All methods have been evaluated by varying  $k$ in $\{2, 16, 32, 64, 128, 256\}$.
{\HAC\ was used to store the compressed matrices, since it has proven to be more effective in this setting.}
{
Parameters $\lambda$ (uECSQ) and $\delta$ (uUQ)
have been tuned to give in output the number $k$ of desired clusters, whereas to reduce the already massive set of experiments, we have set $d=0$ for uUQ}. 
\begin{table}[]
\centering
\caption{Summary of the perfomance of VGG19 trained and tested over the MNIST and CIFAR-10 datasets, after applying different quantization techniques on the Dense layers. \emph{Perf} measured as Accuracy. Values in the $\psi$ column report the ratio between the \HAC{} compressed and uncompressed network sizes. In brackets the Accuracy of the baseline.}
\label{tab:summary-perf-VGG}
\begin{tabular}{cl|rr|rr|}
\cline{3-6}
\multicolumn{1}{l}{} &  & \multicolumn{2}{c|}{\cellcolor[HTML]{B4A7D6}\textbf{\begin{tabular}[c]{@{}c@{}}MNIST\\ (0.9954)\end{tabular}}} & \multicolumn{2}{c|}{\cellcolor[HTML]{B4A7D6}\textbf{\begin{tabular}[c]{@{}c@{}}CIFAR-10\\ (0.9344)\end{tabular}}} \\ \hline
\rowcolor[HTML]{9FC5E8}
\multicolumn{1}{|c|}{\cellcolor[HTML]{B6D7A8}\textbf{$k$}} & \multicolumn{1}{c|}{\cellcolor[HTML]{B6D7A8}\textbf{Method}} & \multicolumn{1}{c|}{\cellcolor[HTML]{9FC5E8}\textbf{Perf}} & \multicolumn{1}{c|}{\cellcolor[HTML]{9FC5E8}\textbf{$\psi$}} & \multicolumn{1}{c|}{\cellcolor[HTML]{9FC5E8}\textbf{Perf}} & \multicolumn{1}{c|}{\cellcolor[HTML]{9FC5E8}\textbf{$\psi$}} \\ \hline
\multicolumn{1}{|c|}{\cellcolor[HTML]{FFFFFF}} & uCWS & 0.2266 & 0.0313 & 0.9355 & 0.0313 \\ \cline{2-2}
\multicolumn{1}{|c|}{\cellcolor[HTML]{FFFFFF}} & uPWS & 0.9951 & 0.0313 & 0.9363 & 0.0313 \\ \cline{2-2}
\multicolumn{1}{|c|}{\cellcolor[HTML]{FFFFFF}} & uUQ & 0.2213 & 0.0313 & 0.1981 & 0.0313 \\ \cline{2-2}
\multicolumn{1}{|c|}{\multirow{-4}{*}{\cellcolor[HTML]{FFFFFF}2}} & uECSQ & 0.9901 & 0.0461 & 0.9368 & 0.0472 \\ \hline
\multicolumn{1}{|c|}{} & uCWS & 0.9954 & 0.1215 & 0.9366 & 0.1179 \\ \cline{2-2}
\multicolumn{1}{|c|}{} & uPWS & 0.9953 & 0.1212 & 0.9368 & 0.1212 \\ \cline{2-2}
\multicolumn{1}{|c|}{} & uUQ & 0.2159 & 0.0314 & 0.1991 & 0.0316 \\ \cline{2-2}
\multicolumn{1}{|c|}{\multirow{-4}{*}{16}} & uECSQ & 0.9957 & 0.0949 & 0.9369 & 0.0801 \\ \hline
\multicolumn{1}{|c|}{} & uCWS & 0.9957 & 0.1467 & 0.9365 & 0.1513 \\ \cline{2-2}
\multicolumn{1}{|c|}{} & uPWS & 0.9955 & 0.1544 & 0.9365 & 0.1545 \\ \cline{2-2}
\multicolumn{1}{|c|}{} & uUQ & 0.2239 & 0.0322 & 0.9370 & 0.0355 \\ \cline{2-2}
\multicolumn{1}{|c|}{\multirow{-4}{*}{32}} & uECSQ & 0.9955 & 0.1338 & 0.9366 & 0.1241 \\ \hline
\multicolumn{1}{|c|}{} & uCWS & 0.9957 & 0.1835 & 0.9364 & 0.1836 \\ \cline{2-2}
\multicolumn{1}{|c|}{} & uPWS & 0.9955 & 0.1867 & 0.9365 & 0.1867 \\ \cline{2-2}
\multicolumn{1}{|c|}{} & uUQ & 0.8908 & 0.0397 & 0.9362 & 0.0498 \\ \cline{2-2}
\multicolumn{1}{|c|}{\multirow{-4}{*}{64}} & uECSQ & 0.9956 & 0.1841 & 0.9366 & 0.1359 \\ \hline
\multicolumn{1}{|c|}{} & uCWS & 0.9956 & 0.2134 & 0.9364 & 0.2162 \\ \cline{2-2}
\multicolumn{1}{|c|}{} & uPWS & 0.9954 & 0.2184 & 0.9363 & 0.2184 \\ \cline{2-2}
\multicolumn{1}{|c|}{} & uUQ & 0.9955 & 0.0559 & 0.9363 & 0.0736 \\ \cline{2-2}
\multicolumn{1}{|c|}{\multirow{-4}{*}{128}} & uECSQ & 0.9958 & 0.1953 & 0.9364 & 0.1787 \\ \hline
\multicolumn{1}{|c|}{} & uCWS & 0.9957 & 0.2477 & 0.9367 & 0.2468 \\ \cline{2-2}
\multicolumn{1}{|c|}{} & uPWS & 0.9955 & 0.2500 & 0.9363 & 0.2500 \\ \cline{2-2}
\multicolumn{1}{|c|}{} & uUQ & 0.9953 & 0.0971 & 0.9364 & 0.1154 \\ \cline{2-2}
\multicolumn{1}{|c|}{\multirow{-4}{*}{256}} & uECSQ & 0.9957 & 0.2283 & 0.9367 & 0.2395 \\ \hline
\end{tabular}
\end{table}
Results are summarized in {Tables \ref{tab:summary-perf-VGG} and S4}. In classification, we notice that the predicting performance of all models is often comparable with the baseline, with rare exceptions in which performance drastically worsens (e.g., uUQ with low $k$).
However, for $k > 64$ (MNIST) or $k > 32$ (CIFAR-10), uUQ becomes competitive in accuracy, outdoing other methods' occupancy ratio (up to around $4 \times$ lower). The same considerations hold for regression, in which the uUQ performance boundary is even lower ($k > 2$ for KIBA and $k > 32$ for DAVIS). When $k \leq 64$, uECSQ often exhibits the top performance with uPWS (classification) and uCWS (regression), though usually yielding bigger occupancy than uECSQ. Moreover, uECSQ tend to compress more than uCWS and cPWS.

Summarizing, yet not exhaustively, these experiments suggest to adopt uUQ when enough distinct weights can be used, whereas uECSQ should be employed in the remaining cases, in which uPWS  and uCWS (respectively on classification and regression) are valid alternatives.

Finally, to provide an insight on how pruning interacts with quantization, we repeated the experiments preposing a pruning stage to quantization as done in Sect.~\ref{subsub:icpr}, varying $p$ in $\{30,  40,  50, 60,  70,  80,  90,  95,  96,  97,  98, 99\}$, and reporting the top performances (Supplementary Table~S5) and the best occupancy configurations ensuring baseline performance (Supplementary Table~S6). In terms of best performance, the methods perform similarly (better than the baseline),
{with uUQ compressing}
much more than all other methods for classification, substantially confirming the results obtained with no pre-pruning. In terms of best occupancy
uECSQ becomes again competitive with uUQ, especially on CIFAR-10 and KIBA. In this setting, even uCWS and uPWS obtain good results (respectively top performance on KIBA and lowest occupancy on DAVIS). On the whole, the tendency shown in Tables~\ref{tab:summary-perf-VGG} and S4 is confirmed when pruning before sharing weights, with the precious benefit of similarly performing while decreasing the occupancy by almost one order of magnitude.
}

\subsection{Compressing only convolutional  layers}
\label{cnn-compr}

{Up to this point only FC layers have been compressed, and in this section we want instead to evaluate the performance of pruning and quantization applied to the convolutional layers of pre-trained models. The aim is obtaining useful information for the final experiment, where both convolutional and FC layers will be compressed simultaneously. Only the performance (accuracy or MSE) is thereby evaluated here, to detect to most meaningful compression configurations on these layers.

}
\subsubsection{Weight pruning}
here weight pruning has been applied only to the weight tensors of convolutional stages.
Results are summarized in Table \ref{tab:cnn-pruning}.
Pruning helps also in this case, with performance improvements w.r.t.\ the baseline (row $p=0$) up to  $p=70$ for MNIST and KIBA, $p=10$ for CIFAR and $p=80$ for DAVIS. {The top performance achieved are slightly worse than those obtained
in Sect.~\ref{subsub:icpr};
notwithstanding, it is not possible to achieve the same pruning percentile used on FC layers, which is quite expected, since convolutional layers are responsible for input scan and elaboration. Hence, pruning on these layers cannot always be increased till levels making beneficial the use of \sHAC, without having a sensible performance loss. Further, with no more than $10\%$ of pruning (CIFAR-10), even \HAC\ would compress less, while increasing the dot procedure time (see Fig.~\ref{fig:cifarMnistTimeSpace}). Finally, this deterioration would also be amplified by the application of weight sharing, still fostering a reduction of pruning percentile. In such a setting, the adoption of our storage formats should be based on the maximum levels of pruning applicable without having an excessive performance loss, as discussed also here below.

\begin{table}[]
\centering
\caption{Summary of the testing performance (accuracy for VGG19 and MSE for DeepDTA) of the networks after applying pruning to convolutional layers. Column $p$ is the level of pruning.}\label{tab:cnn-pruning}
\begin{tabular}{c|cccc|}
\cline{2-5}
\multicolumn{1}{l|}{} & \multicolumn{2}{c|}{\cellcolor[HTML]{9FC5E8}\textbf{VGG19}} & \multicolumn{2}{c|}{\cellcolor[HTML]{9FC5E8}\textbf{DeepDTA}} \\ \cline{1-1}
\rowcolor[HTML]{9FC5E8}
\multicolumn{1}{|c|}{\cellcolor[HTML]{A8D08D}$p$} & \textbf{MNIST} & \multicolumn{1}{c|}{\cellcolor[HTML]{9FC5E8}\textbf{CIFAR}} & \textbf{KIBA} & \textbf{DAVIS} \\ \hline
\multicolumn{1}{|c|}{0} & 0.9954 & 0.9344 & 0.1756 & 0.3223 \\
\multicolumn{1}{|c|}{10} & 0.9957 & 0.9355 & 0.1561 & 0.2220 \\
\multicolumn{1}{|c|}{20} & 0.9957 & 0.9341 & 0.1565 & 0.2233 \\
\multicolumn{1}{|c|}{30} & 0.9957 & 0.9337 & 0.1566 & 0.2238 \\
\multicolumn{1}{|c|}{40} & 0.9957 & 0.9333 & 0.1576 & 0.2218 \\
\multicolumn{1}{|c|}{50} & 0.9955 & 0.9289 & 0.1571 & 0.2237 \\
\multicolumn{1}{|c|}{60} & 0.9956 & 0.9255 & 0.1577 & 0.2224 \\
\multicolumn{1}{|c|}{70} & 0.9951 & 0.9179 & 0.1600 & 0.2234 \\
\multicolumn{1}{|c|}{80} & 0.9944 & 0.9084 & 0.2223 & 0.2433 \\
\multicolumn{1}{|c|}{90} & 0.9917 & 0.8802 & 0.3139 & 0.3492 \\
\multicolumn{1}{|c|}{95} & 0.9907 & 0.7950 & 0.3692 & 0.4136 \\
\multicolumn{1}{|c|}{96} & 0.9909 & 0.7608 & 0.3796 & 0.4753 \\
\multicolumn{1}{|c|}{97} & 0.9903 & 0.6910 & 0.4067 & 0.5180 \\
\multicolumn{1}{|c|}{98} & 0.9882 & 0.6154 & 0.4576 & 0.5350 \\
\multicolumn{1}{|c|}{99} & 0.9852 & 0.5204 & 0.5446 & 0.6548 \\ \hline
\end{tabular}
\end{table}

\subsubsection{Quantization via weight sharing}\label{subsub:quantCNN}
{the convolutional blocks have been compressed via the four quantization methods compared so far. The unified variant (cfr.\ Sect.~\ref{sub:unifiedQuant}) is considered, sharing weights and dictionaries globally across layers. The results for $k\in \{32, 64, 128, 256\}$ are shown in Table~S7, where too low values for $k$ have not been reported because they showed poor results. For classification, quantizing convolutional layers clearly leads to less effective models: indeed, the baseline is almost never improved. Moreover, the difficulty of uUQ with low values of $k$ observed in Sect.~\ref{sub:SOA} is confirmed and intensified, with even uPWS showing a similar behavior for $k = 32$ and $64$, corroborating the suspicion
that the NN is more sensitive to variations in its convolutional layers. On the other hand, uCWS and uECSQ tend to perform better than the other two techniques, exhibiting higher robustness for small values of $k$. The results for regression are more stable, as here the tendency is similar to that observed on FC layers: uUQ and uPWS are again competitive with the other two methods, with uUQ achieving the lowest MSE on KIBA and even when using the smallest $k$ tested. In general, performance is slightly worse and more unstable than that obtained by pruning the same layers, mainly for lower values of $k$. This suggests to apply not too radical quantizations, and if possible using pruning instead of quantization on convolutional layers. However, low pruning percentages are not enough to fully exploit \HAC\ and \sHAC, as well as CSC; thus, for compressing these layers in the experiments of next section, we decided to apply quantization without pruning (to not amplify the above-mentioned instability) and to employ index map representation to compress the obtained tensors. In fact, with low pruning rates, this method showed the best trade-off between compression and dot product time (Sect.~\ref{sub:soa}).
}
\subsection{Compressing dense and convolutional layers}\label{sub:final}
%
{In this section we present the more general experiment, aiming at compressing and storing in space-conscious formats all layers of a NN.
The results of
Sect.~\ref{cnn-compr}
suggested to apply low levels of pruning and quantization to convolutional layers,
making the index map more suitable to store such layers. In this setting, it is more convenient to not apply pruning, since it helps preventing further accuracy decay, and it does not reduce the
occupancy ratio. Therefore, an hybrid format is used in this experiment: index map for convolutional layers, and \HAC\ or \sHAC\ for FC layers, which undergo both pruning and quantization.
}
{It is worth nothing that extending the quantization via WS to all layers means, in the unified setup, that convolutional and FC layers will share the same representatives;  this experiment offers thereby also a full insight  of  how  the  proposed  strategies  interact  and perform  when  globally applied to the network.
In accordance to the results obtained in Sects~\ref{subsub:icpr} and~\ref{subsub:quantCNN}, the values $k\in \{32, 64, 128, 256\}$ have been tested.
On the other side, to prune FC layers, the results obtained in Sect.~\ref{subsub:icpr} suggested to  adopt different values for $p$ on each dataset: $p \in \{90, 92, 95, 97, 99\}$ for MNIST and CIFAR-10, $p \in \{50, 55, 60, 65, 70\}$ for KIBA, and $p \in \{70, 75, 80, 85, 90\}$ for DAVIS.

Finally, the matrices of FC layers have been stored with \HAC{} when it was more convenient than \sHAC\, and this is marked in Supplementary Tables
S8--S11
using $*$ besides the corresponding values. To better understand these results, we remark that occupancy ratios reported in the previous sections where limited to FC layers, not to the whole net; hence, since for the index map method  the occupancy ratio cannot be lower than $0.25$ (even larger when $k > 256$), the overall occupancy ratio is destined to increase with reference to that of FC layers. As a confirmation, the lowest occupancy registered is $0.0486$ (uPWS, DAVIS, $k$=32). Further, on VGG19 (the model with larger convolutional block) occupancy ratios are in average higher than on DeepDTA.
A first important observation is that performance tends to deteriorate in this setting (as expected), mainly on classification, compared with those obtained when compressing solely convolutional or FC layers. Nevertheless, the top performance achieved is still higher than the baseline on MNIST (uCWS, $p=92$, $k=256$) and occupying only the $16.666\%$ of the original network, whereas on CIFAR-10 data the best accuracy is only the $0.14\%$ smaller than the baseline (uUQ, $p=95$, $k=256$), while occupying just the $14.2\%$. Besides, the occupancy on MNIST can be halved to the detriment of $0.19\%$ of accuracy (uPWS, $p=97$, $k=32$).
The results with regression are more stable, confirming the trend shown in previous sections.
On KIBA, the top performance improves the baseline by $6\%$ with an occupancy of $0.1320$ (uCWS, $p=50$, $k=64$), and it can compress up to $0.0806$, still outperforming the uncompressed model (uUQ, $p=60$, $k=32$). On DAVIS, most configurations improve the baseline MSE, among which the best occupancy is $0.0544$ (uUQ, $p=90$, $k=32$), which means around $20\times$ smaller than the original NN. These results are quite impressive if we consider that model structure has not been modified.
Concerning quantization, the results support most of the analyses
done so far,
with uPWS and uUQ sometimes showing unstable performance for low values of $k$ on classification, uUQ (and sometimes uPWS) often exhibiting the best compression, and uCWS or uECSQ being good compromises between accuracy and space.

Summing up, on all models and data our format can reduce more than $5\times$ the pre-trained NN size, with peaks of around $20\times$, often improving or nearly matching its performance. The higher variability of results observed on some configurations likely depends on the fact that the actual number of clusters $k$ can be smaller than the reported one, due to a potential centroid overlapping during retraining (see Sect.~\ref{sub:WSquant}).
}

\section{Conclusions}
This work proposed two \emph{structure-preserving} CNN representations, \HAC\ and \sHAC, to compress pre-trained models while substantially maintaining their accuracy. As they exploit weight pruning and quantization, an extended comparison of four quantization methods based on WS, namely CWS, PWS, UQ and ECSQ, has been initially performed, suggesting that UQ is in average preferable when enough $k$ distinct weights are available (that is,  quantization is not too drastic), whereas EQCS should be used in the remaining cases, as an alternative to PWS on classification and to CWS on regression problems. Further, on convolutional layers CWS and ECQS are in general more stable with respect to $k$ variations on classification problems. \HAC\ and \sHAC\ exploited the quantized and pruned weight matrices to apply Huffman coding on the resulting source, and storing the resulting bitstring in a compact array format. A dedicated dot procedure has been implemented to supply an efficient matrix multiplication to evaluate the NN. In a comparison with state-of-the-art matrix compression methods, our formats have definitely shown higher compression rates when matrices are quantized and sufficiently sparse, and in some cases even a faster dot product. When applied to four benchmarks and two publicly retrievable pre-trained models, in most cases they preserved or improved performance,
while reducing memory requirements up to around $20\times$. The computed occupancy resulted in all cases much smaller than the theoretical upper bound derived here, thus suggesting that more strict bounds can probably be deduced. We found compressing convolutional layers being more critical than the dense ones, which is reasonable considering the different role they have in the input processing, but preventing to obtain high levels of pruning and quantization. This limited the application of our formats to these layers, thus future studies will be conveyed in this direction, to extend the representation so as to better exploit  lower pruning and quantization levels. Moreover, a finer level of parallelism can be introduced in the dot procedure to further speed up computations, by considering the vector of the offsets in the bitstring denoting the beginning of each column of the compressed matrix, and partitioning the columns in chunks assigned to different threads/cores to perform the corresponding  dot products.
 Finally,  coding methodologies less sensitive to source statistics, known as \textit{universal lossless source coding} (e.g., the Lempel–Ziv source coding), can be applied to reduce memory requirements, since they exhibit a smaller overhead  than Huffman coding.

\section*{Acknowledgements}
This work has been supported by the Italian MUR PRIN project “Multicriteria data structures and algorithms: from compressed to learned
indexes, and beyond” (Prot. 2017WR7SHH).

\bibliographystyle{IEEEtran}
\bibliography{references.bib}

\end{document}